\newcommand{\A}{\mathbf{A}}
\title{Private Optimization Without Constraint Violations}
\author{
	Andr\'es Mu\~noz Medina, Umar Syed, Sergei Vassilvitskii \\
	Google Research\\
	\texttt{\{ammedina,usyed,sergeiv\}@google.com}
	\and 
	Ellen Vitercik \\ Carnegie Mellon University \\ \texttt{vitercik@cs.cmu.edu}}
\begin{document}
	
	\maketitle

\begin{abstract}
We study the problem of differentially private optimization with linear constraints when the right-hand-side of the constraints depends on private data. This type of problem appears in many applications, especially resource allocation.
Previous research provided solutions that retained privacy but sometimes violated the constraints.
In many settings, however, the constraints cannot be violated under any circumstances. To address this hard requirement, we present an algorithm that releases a nearly-optimal solution satisfying the constraints with probability 1. We also prove a lower bound demonstrating that the difference between the objective value of our algorithm's solution and the optimal solution is tight up to logarithmic factors among all differentially private algorithms. We conclude with experiments demonstrating that our algorithm can achieve nearly optimal performance while preserving privacy. 
\end{abstract}

\section{Introduction}\label{sec:intro}
Differential privacy~\citep{Dwork06:Calibrating} has emerged as the standard for reasoning about user privacy and private computations. A myriad of practical algorithms exist for a broad range of problems. We can now solve tasks in a private manner ranging from computing simple dataset statistics~\citep{Nissim:Smooth} to modern machine learning~\citep{Abadi16:Deep}. In this paper we add to this body of research by tackling a fundamental question of constrained optimization. 

Specifically, we study optimization problems with linear constraints and Lipschitz objective functions. This family of optimization problems includes linear programming and quadratic programming with linear constraints, which can be used to formulate diverse problems in computer science, as well as other fields such as engineering, manufacturing, and transportation. 
Resource allocation is an example of a common problem in this family: given multiple agents competing for limited goods, how should the goods be distributed among the agents? Whether assigning jobs to machines or partitioning network bandwidth among different applications, these problems have convex optimization formulations with linear constraints. Given that the input to these problems may come from private user data, it is imperative that we find solutions that do not leak information about any individual.  

Formally, the goal in linearly-constrained optimization is to find a vector $\vec{x}$ maximizing a function $g(\vec{x})$ subject to the constraint that $\A \vec{x} \leq \vec{b}$. Due in part to the breadth of problems covered by these approaches, the past several decades have seen the development of a variety of optimization algorithms with provable guarantees, as well as fast commercial solvers. The parameters $\A$ and $\vec{b}$ encode data about the specific problem instance at hand, and it is easy to come up with instances where simply releasing the optimal solution would leak information about this sensitive data.

As a concrete example, suppose there is a hospital with branches located throughout a state, each of which has a number of patients with a certain disease. A specific drug is required to treat the infected patients, which the hospital can obtain from a set of pharmacies. The goal is to determine which pharmacies should supply which hospital branches while minimizing the transportation cost. In Figure~\ref{fig:transportation}, we present this problem as a linear program (LP). The LP is defined by sensitive information: the constraint vector reveals the number of patients with the disease at each branch.
\begin{figure}
\[\begin{array}{lll}
     \text{minimize} & \sum_{i,j} c_{ij}x_{ij} &\\
     \text{such that}& \sum_{j = 1}^N x_{ij}  \leq s_i &\forall i \in [M]\\
     &\sum_{i = 1}^M x_{ij} \geq r_j &\forall j \in[N]\\
      &x_{ij} \geq 0 &\forall i  \in [M], \forall j \in [N].
\end{array}\]
\caption{The classic transportation problem formulated as a linear program. There are $N$ hospital branches and $M$ pharmacies. Each branch $j$ requires $r_j$ units of a specific drug. These values are sensitive because they reveal the number of people at each branch with a specific disease. Each pharmacy $i$ has a supply of $s_i$ units. It costs $c_{ij}$ dollars to transport a unit of the drug from pharmacy $i$ to hospital $j$. We use the notation $x_{ij}$ to denote the units of the drug transported from pharmacy $i$ to hospital $j$.}\label{fig:transportation}
\end{figure}

We provide tools with provable guarantees for solving linearly-constrained optimization problems while preserving \emph{differential privacy} (DP)~\citep{Dwork06:Calibrating}. Our algorithm applies to the setting where the constraint vector $\vec{b}$ depends on private data, as is the case in many resource allocation problems, such as the transportation problem above. This problem falls in the category of private optimization, for which there are multiple algorithms in the unconstrained case~\citep{Bassily14:ERM, Chaudhuri11:Differentially, KiferST12:PrivateReg}. To the best of our knowledge, only \citet{Hsu14:Privately} and \citet{Cummings15:Privacy} study differentially private linear programming---a special case of linearly-constrained optimization. Their algorithms are allowed to violate the constraints, which can be unacceptable in many applications. In our transportation example from Figure~\ref{fig:transportation}, if the constraints are violated, a hospital will not receive the drugs they require or a pharmacy will be asked to supply more drugs than they have in inventory. The importance of satisfying constraints motivates this paper's central question:

\smallskip
\emph{How can we privately solve optimization problems while ensuring that no constraint is violated?}

\subsection{Results overview}
Our goal is to privately solve optimization problems of the form $\max_{\vec{x} \in \R^n} \left\{g(\vec{x})  : \A\vec{x} \leq \vec{b}(D)\right\},$ where $g$ is $L$-Lipschitz and $\vec{b}(D) \in \R^m$ depends on a private database $D$. The database is a set of individuals' records, each of which is an element of a domain $\cX.$ 

To solve this problem, our $(\epsilon, \delta)$-differentially private algorithm maps $\vec{b}(D)$ to a nearby vector $\bar{\vec{b}}(D)$ and releases the vector maximizing $g(\vec{x})$ such that $\A \vec{x} \leq \bar{\vec{b}}(D)$. (We assume that $g$ can be optimized efficiently under linear constraints, which is the case, for example, when $g$ is convex.) We ensure that $\bar{\vec{b}}(D) \leq \vec{b}(D)$ coordinate-wise, and therefore our algorithm's output satisfies the constraints. This requirement precludes our use of traditional DP mechanisms: perturbing each component of $\vec{b}(D)$ using the Laplace, Gaussian, or exponential mechanisms would not result in a vector that is component-wise smaller than $\vec{b}(D)$. Instead, we extend the truncated Laplace mechanism to a multi-dimensional setting to compute $\bar{\vec{b}}(D)$.

As our main contribution, we prove that this approach is nearly optimal: we provide upper and lower bounds showing that the difference between the objective value of our algorithm's solution and the optimal solution is tight up to a factor of $O(\ln m)$ among all differentially private algorithms.
First, we present an upper bound on the utility of our algorithm. We prove that if $\vec{x}(D) \in \mathbb{R}^n$ is our algorithm's output and $\vec{x}^*$ is the optimal solution to the original optimization problem, then $g(\vec{x}(D))$ is close to $g\left(\vec{x}^*\right)$. Our bound depends on the \emph{sensitivity} $\Delta$ of the vector $\vec{b}(D)$, which equals the maximum $\ell_{1}$-norm between any two vectors $\vec{b}(D)$ and $\vec{b}(D')$ when $D$ and $D'$ are neighboring, in the sense that $D$ and $D'$ differ on at most one individual's data. Our bound also depends on the ``niceness'' of the matrix $\A$, which we quantify using the \emph{condition number} $\alpha(\A)$ of the linear system\footnote{Here, we use the simplified notation $\alpha(\A) = \inf_{p \geq 1} \left\{\alpha_{p,q}(\A)\sqrt[p]{m} \right\}$, where $\alpha_{p,q}(\A)$ is defined in Section~\ref{sec:multi_dim} and $\norm{\cdot}_q$ is the $\ell_q$-norm under which $g$ is $L$-Lipschitz.}
~\citep{Li93:Sharp, Mangasarian81:Condition}. We  summarize our upper bound below (see Theorem~\ref{thm:main} for the complete statement).
\begin{theorem}[Simplified upper bound]
	With probability 1, \begin{equation}g\left(\vec{x}^*\right) - g(\vec{x}(D))\leq \frac{2\cdot \Delta \cdot L \cdot \alpha(\A)}{\epsilon} \ln \left(\frac{m\left(e^{\epsilon} - 1\right)}{\delta} + 1\right).\label{eq:ub}\end{equation}
	\end{theorem}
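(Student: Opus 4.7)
The plan is to decompose the bound into three pieces: (i) a deterministic almost-sure bound on the perturbation $\vec{b}(D) - \bar{\vec{b}}(D)$ produced by the multi-dimensional truncated Laplace mechanism, (ii) a transfer of that perturbation bound into a bound on the distance from $\vec{x}^*$ to some point that is actually feasible for the perturbed LP, via the condition number $\alpha(\A)$, and (iii) an application of the $L$-Lipschitzness of $g$ combined with the optimality of $\vec{x}(D)$ for the perturbed program. The key observation making a ``with probability 1'' statement possible is that the mechanism truncates its noise, so the guarantees below hold pointwise rather than in expectation.

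For step (i), I would show that the noise vector $\vec{b}(D) - \bar{\vec{b}}(D)$ is componentwise nonnegative (by construction, to guarantee feasibility) and has $\ell_\infty$-norm at most $B := \tfrac{2\Delta}{\epsilon}\ln\!\bigl(\tfrac{m(e^\epsilon-1)}{\delta}+1\bigr)$ almost surely. This reduces to a one-dimensional computation: choosing a truncation threshold $B$ for a one-sided (exponential-tail) Laplace noise with scale $\Delta/\epsilon$ and verifying, from the CDF of the truncated density, that the resulting mechanism is $(\epsilon,\delta)$-DP under $\ell_1$-sensitivity $\Delta$ after accounting for the $m$ coordinates; the $m$ inside the logarithm is precisely the price paid for this accounting.

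For step (ii), I would invoke a Hoffman/Mangasarian-type condition-number inequality: for each $p\ge 1$, since $\vec{x}^*$ satisfies $\A\vec{x}^*\le\vec{b}(D)$, its violation of the tightened constraint $\A\vec{x}\le\bar{\vec{b}}(D)$ is componentwise at most $\vec{b}(D)-\bar{\vec{b}}(D)$, so
\[
\bigl\|[\A\vec{x}^*-\bar{\vec{b}}(D)]_+\bigr\|_p \;\le\; \bigl\|\vec{b}(D)-\bar{\vec{b}}(D)\bigr\|_p \;\le\; m^{1/p}\,B.
\]
The definition of $\alpha_{p,q}(\A)$ then produces a point $\vec{y}$ with $\A\vec{y}\le\bar{\vec{b}}(D)$ and $\|\vec{x}^*-\vec{y}\|_q \le \alpha_{p,q}(\A)\cdot m^{1/p}\,B$. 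Taking an infimum over $p$ replaces $\alpha_{p,q}(\A)\,m^{1/p}$ by $\alpha(\A)$.

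For step (iii), since $\vec{x}(D)$ is optimal for $\max\{g(\vec{x}):\A\vec{x}\le\bar{\vec{b}}(D)\}$ and $\vec{y}$ is feasible for the same program, $g(\vec{x}(D))\ge g(\vec{y})$; combined with $L$-Lipschitzness under $\|\cdot\|_q$,
\[
g(\vec{x}^*)-g(\vec{x}(D)) \;\le\; g(\vec{x}^*)-g(\vec{y}) \;\le\; L\,\|\vec{x}^*-\vec{y}\|_q \;\le\; L\,\alpha(\A)\,B,
\]
which is exactly the claimed inequality after substituting the value of $B$. The main obstacle I expect is step (i): pinning down the precise truncation constant so that $(\epsilon,\delta)$-DP holds under $\ell_1$ sensitivity $\Delta$ across all $m$ coordinates while producing exactly the factor $\ln(m(e^\epsilon-1)/\delta+1)$; the constant $2$ in front appears to come either from a symmetric-to-one-sided adjustment of the truncated-Laplace CDF or from the union-bound step that distributes the failure budget $\delta$ across coordinates, and this bookkeeping must be done carefully to match the stated expression.
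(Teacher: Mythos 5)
Your proposal is correct and follows essentially the same route as the paper's proof of Theorem~\ref{thm:main}: an almost-sure bound of $2s\,m^{1/p}$ on $\|\vec{b}(D)-\bar{\vec{b}}(D)\|_p$, Li's condition-number inequality to convert that into a distance from $\vec{x}^*$ to the perturbed feasible set, and Lipschitzness plus optimality of $\vec{x}(D)$ for the perturbed program. The one point you flagged as uncertain resolves simply: the factor $2$ is not DP bookkeeping but comes from the mechanism's definition $\bar{b}(D)_i = \max\{b(D)_i - s + \eta_i,\, b_i^*\}$ with $\eta_i$ supported on $[-s,s]$, so the worst-case per-coordinate perturbation is the deterministic shift $s$ plus the noise magnitude $s$, while the $m$ inside the logarithm arises in the separate privacy argument (Theorem~\ref{thm:private}), where the tail mass falling outside the common support is summed over the $m$ coordinates to total $\delta$.
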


We provide a lower bound that shows that Equation~\eqref{eq:ub} is tight up to a logarithmic factor.
	\begin{theorem}[Simplified lower bound]
	There is an infinite family of matrices $\A \in \R^{m \times m}$, a $1$-Lipschitz function $g: \R^m \to \R$, and a mapping from databases $D \subseteq \cX$ to vectors $\vec{b}(D) \in \R^m$ for any $\Delta > 0$ such that:
	\begin{enumerate}
		\item The sensitivity of $\vec{b}(D)$ equals $\Delta$, and
		\item For any $\epsilon > 0$ and $\delta \in (0, 1/2]$, if $\vec{\mu}$ is an $(\epsilon, \delta)$-differentially private mechanism such that $\A\vec{\mu}(D) \leq \vec{b}(D)$ with probability 1, then \[g(\vec{x}^*)  - \E[g(\vec{\mu}(D))] \geq \frac{\Delta \cdot \alpha(\A)}{4\epsilon} \ln \left(\frac{e^{\epsilon} - 1}{2\delta} + 1\right).\]
	\end{enumerate}
\end{theorem}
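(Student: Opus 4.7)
The plan is to combine a group-privacy packing argument, which furnishes the one-dimensional ``price'' of the no-violation requirement, with a geometric construction that realizes the condition number $\alpha(\A)$ tightly. Fix a matrix $\A$ from the family, together with an adversarial unit $\ell_1$-vector $\vec{u} \in \R^m$ and a $1$-Lipschitz linear objective $g(\vec{x}) = \vec{v}^\top \vec{x}$, chosen so that (i) the optimum of $g$ over $\{\vec{x} : \A\vec{x}\leq \vec{b}\}$ decreases by exactly $\alpha(\A)\cdot t$ as $\vec{b}$ shifts by $-t\vec{u}$, and (ii) the infimum over $p$ in $\alpha(\A) = \inf_{p\geq 1}\{\alpha_{p,q}(\A)\,m^{1/p}\}$ is attained. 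Then build a chain of databases $D_0, D_1, \dots, D_k$, with consecutive pairs being neighboring, such that $\vec{b}(D_i) := \vec{b}(D_0) - i\Delta\,\vec{u}$; the sensitivity of $\vec{b}(\cdot)$ is then exactly $\Delta$, the feasible polytopes are strictly nested, and $g(\vec{x}^*(D_0)) - g(\vec{x}^*(D_k)) = \alpha(\A)\cdot k\Delta$.

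Next, apply group privacy: any $(\epsilon,\delta)$-DP mechanism $\vec{\mu}$ satisfies
\[\Pr[\vec{\mu}(D_0) \in S] \leq e^{k\epsilon}\Pr[\vec{\mu}(D_k) \in S] + \frac{e^{k\epsilon}-1}{e^\epsilon-1}\,\delta\]
for every measurable $S$. Taking $S$ to be the set of points infeasible for $D_k$, the almost-sure feasibility hypothesis on $\vec{\mu}(D_k)$ kills the first term, so $\Pr[\vec{\mu}(D_0) \text{ is feasible for } D_k] \geq 1 - \tfrac{e^{k\epsilon}-1}{e^\epsilon-1}\delta$. Choosing $k := \lfloor \epsilon^{-1}\ln(\tfrac{e^\epsilon-1}{2\delta}+1)\rfloor$ makes this probability at least $1/2$, and the hypothesis $\delta \leq 1/2$ forces $k \geq 1$, so the floor costs at most a factor of $2$. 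On the feasible event, $g(\vec{\mu}(D_0)) \leq g(\vec{x}^*(D_k)) = g(\vec{x}^*(D_0)) - \alpha(\A)\,k\Delta$; on the complementary event, we use the trivial bound $g(\vec{\mu}(D_0))\leq g(\vec{x}^*(D_0))$ from $D_0$-feasibility. Averaging yields $g(\vec{x}^*(D_0)) - \E[g(\vec{\mu}(D_0))] \geq \tfrac{1}{2}\alpha(\A)\,k\Delta \geq \tfrac{\Delta\,\alpha(\A)}{4\epsilon}\ln(\tfrac{e^\epsilon-1}{2\delta}+1)$, which is the claimed bound.

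The main obstacle is the first step: explicitly exhibiting an infinite family of matrices $\A$, together with aligned perturbation and objective directions, that \emph{saturate} the infimum defining $\alpha(\A)$ rather than merely obeying it as an upper bound. This likely requires a parametric construction---perhaps scaled identity or all-ones blocks tuned to each $m$---where the optimal $p$ and the adversarial $\vec{u}$ realizing $\alpha_{p,q}(\A)$ can be identified explicitly, and where the $\ell_1$-sensitivity used by DP aligns naturally with the $\ell_p$-norm in which $\alpha_{p,q}(\A)$ is defined. Once such an adversarial instance is fixed, the group-privacy calculation above is routine.
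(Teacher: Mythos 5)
Your group-privacy calculation is sound, and it is in fact the same engine the paper uses: the paper's Claim~\ref{claim:md_induction} proves, by induction along a chain of neighboring databases whose far endpoint places zero mass on the bad region, exactly the bound $\Pr[\vec{\mu}(D_0)\in S]\leq\delta\sum_{\ell=0}^{k-1}e^{\epsilon\ell}$ that you invoke, and your choice $k=\lfloor t\rfloor$ with $t=\frac{1}{\epsilon}\ln\left(\frac{e^{\epsilon}-1}{2\delta}+1\right)$ together with the step $\delta\leq\frac{1}{2}\Rightarrow\lfloor t\rfloor\geq t/2$ are identical to the paper's. The genuine gap is in your first step, and it is not merely the technical difficulty you flag of ``saturating the infimum'': a single chain shifted along one unit-$\ell_1$ direction $\vec{u}$ cannot produce the factor $\alpha(\A)=\inf_{p\geq 1}\left\{\alpha_{p,q}(\A)\sqrt[p]{m}\right\}$ at all. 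Each neighboring step moves $\vec{b}$ by $\Delta$ in $\ell_1$ (forced by the sensitivity requirement), so the optimum of $g$ drops by at most $L\cdot\alpha_{1,q}(\A)\cdot\Delta$ per step, and $\alpha_{1,q}(\A)$ can be a factor of $m$ smaller than $\alpha(\A)$. Concretely, for the scaled identity $\A=aI$ and $g(\vec{x})=\langle\vec{1},\vec{x}\rangle$ (your own suggested candidate), $\alpha(\A)=m/a$, but any unit-$\ell_1$ shift of $\vec{b}$ lowers the optimum by at most $1/a$; your argument therefore yields a bound weaker than the claim by a factor of $m$, and no choice of $\vec{u}$, of $p$, or of the diagonal family repairs this within the single-chain framework.

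The missing idea is that the $\sqrt[p]{m}$ factor must be earned by forcing the mechanism to pay the one-dimensional price $\Delta\lfloor t\rfloor/a_i$ \emph{in every coordinate simultaneously}. The paper takes $\A$ diagonal with entries $a_i$, indexes databases by $\vec{d}\in\Z^m$ with $\vec{b}\left(D_{\vec{d}}\right)=\Delta\vec{d}$, and runs your chain argument $m$ times, once along each coordinate direction $\vec{e}_i$ (each unit step is a neighboring pair precisely because sensitivity is measured in $\ell_1$). This yields $\Pr\left[\mu\left(D_{\vec{d}}\right)_i\leq\frac{\Delta}{a_i}\left(d_i-\lfloor t\rfloor\right)\right]>\frac{1}{2}$ for every $i$. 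Because $g=\langle\vec{1},\cdot\rangle$ is separable, these per-coordinate losses add: partitioning the output space into the rectangles $S_{\vec{I}}=\cap_{i=1}^m S_i^{I_i}$ and applying the law of total expectation gives $\E\left[g\left(\vec{\mu}\left(D_{\vec{d}}\right)\right)\right]\leq\Delta\sum_i d_i/a_i-\Delta\lfloor t\rfloor\sum_i\frac{1}{a_i}\Pr\left[\vec{\mu}\left(D_{\vec{d}}\right)\in S_i^0\right]$, hence a loss of at least $\frac{\Delta\lfloor t\rfloor}{2}\sum_i\frac{1}{a_i}\geq\frac{\Delta t}{4}\,\alpha_{\infty,1}(\A)\geq\frac{\Delta t}{4}\inf_{p\geq 1}\alpha_{p,1}(\A)\sqrt[p]{m}$. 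Note also that the paper never needs the infimum over $p$ to be attained by the construction; it only needs $\inf_{p\geq 1}\alpha_{p,1}(\A)\sqrt[p]{m}\leq\alpha_{\infty,1}(\A)=\sum_i 1/a_i$, which holds simply because $p=\infty$ is a feasible choice, so the ``saturation'' obstacle you anticipate does not actually arise.
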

This lower bound matches  the upper bound from Equation~\eqref{eq:ub} up to a multiplicative factor of $O(\ln m)$.
See Theorem~\ref{thm:lb} for the complete statement.

\paragraph{Pure differential privacy.} A natural question is whether we can achieve pure $(\epsilon, 0)$-DP. In Appendix~\ref{app:characterization}, we prove that if $S^* := \bigcap_{D \subseteq \cX} \left\{\vec{x} : \A \vec{x} \leq \vec{b}(D) \right\}$---the intersection of the feasible regions across all databases $D$---is nonempty, then the optimal $(\epsilon, 0)$-differentially private mechanism disregards the database $D$ and outputs $\argmax_{\vec{x} \in S^*} g(\vec{x})$ with probability 1. If $S^* = \emptyset$, then no $(\epsilon, 0)$-differentially private mechanism exists. Therefore, any non-trivial private mechanism must allow for a failure probability $\delta > 0$. 

\paragraph{Experiments.} We empirically evaluate our algorithm in the contexts of financial portfolio optimization and internet advertising. Our experiments show that our algorithm can achieve nearly optimal performance while preserving privacy. We also compare our algorithm to a baseline $(\epsilon, 0)$-differentially private mechanism that is allowed to violate the problem's constraints. Our experiments demonstrate that for small values of the privacy parameter $\epsilon$, using the baseline algorithm yields a large number of violated constraints, while using our algorithm violates no constraints and incurs virtually no loss in revenue.

\subsection{Additional related research}

\paragraph{Truncated Laplace mechanism.}
Many papers have employed the truncated Laplace mechanism for various problems~\citep[e.g.,][]{Zhang12:Functional,Rinott18:Confidentiality,Bater18:Shrinkwrap,Croft19:Differential,Holohan20:Bounded,Geng20:Tight}. Our main contribution is not the use of this tool, but rather our proof that the truncated Laplace mechanism is the ``right'' tool to use for our optimization problem, with upper and lower bounds that match up to logarithmic factors.

Out of all papers employing the truncated Laplace mechanism, the one that is the most closely related to ours is by \citet{Geng20:Tight}, who study this mechanism in a one-dimensional setting. Given a query $q$ mapping from databases $D$ to $\R$, they study \emph{query-output independent noise-adding (QIN) algorithms}. Each such algorithm $\mu$ is defined by a distribution $P$ over $\R.$ It releases the query output $q(D)$ perturbed by additive random noise $X \sim P$, i.e., $\mu(D) = q(D) + X.$ 
They provide upper and lower bounds on the expected noise magnitude $|X|$ of any QIN algorithm, the upper bound equaling the expected noise magnitude of the truncated Laplace mechanism. They show that in the limit as the privacy parameters $\epsilon$ and $\delta$ converge to zero, these upper and lower bounds converge.

The Laplace mechanism is known to be a nearly optimal, general purpose $(\epsilon, 0)$-DP mechanism. While other task-specific mechanisms can surpass the utility of the Laplace mechanism \citep{Staircase}, they all induce distributions with exponentially decaying tails. The optimality of these mechanisms comes from the fact that the ratio between the mechanism's output distributions for any two neighboring databases is exactly $\exp(\epsilon)$. Adding less noise would fail to maintain that ratio everywhere, while adding more noise would distort the query output more than necessary.
\citet{Geng20:Tight} observe that in the case of $(\epsilon, \delta)$-DP mechanisms, adding large magnitude, low probability noise is wasteful, since the DP criteria can instead be satisfied using the $\delta$ ``budget'' rather than maintaining the $\exp(\epsilon)$ ratio everywhere.
To solve our private optimization problem, we shift and add noise to the constraints, and in our case adding large magnitude, low probability noise is not only wasteful but will cause the constraints to be violated.

Given their similar characterizations, it is not surprising that our mechanism is closely related to that of \citet{Geng20:Tight}---the mechanisms both draw noise from a truncated Laplace distribution.
The proof of our mechanism's optimality, however, is stronger than that of \citeauthor{Geng20:Tight}'s in several ways. First, it holds for any differentially private algorithm, not just for the limited class of QIN algorithms. Second, in the one-dimensional setting $(m=1)$---which is the setting that \citet{Geng20:Tight} analyze---our lower bound matches our algorithm's upper bound up to a constant factor of 8 for any $\epsilon > 0$ and $\delta \in (0, 1/2]$, not only in the limit as $\epsilon$ and $\delta$ converge to zero.

\paragraph{Private convex optimization.} There are multiple algorithmic approaches to differentially private convex optimization. Among others, these approaches include output and objective perturbation \citep{Chaudhuri11:Differentially},
the exponential mechanism \citep{Bassily14:ERM}, and private stochastic gradient descent \citep{BassilyFTT19}. The optimization problems tackled by these papers are either unconstrained, or the constraints are public information~\citep{Bassily14:ERM}. By contrast, the problems we show how to solve have private constraints. While Lagrange multipliers can transform a constrained problem into an unconstrained problem, we are not aware of a principled method for selecting Lagrange multipliers that would ensure constraint satisfaction. In fact, to \emph{privately} find the correct multiplier seems to be an equivalent problem to the one we are proposing.

To the best of our knowledge, only \citet{Hsu14:Privately} and~\citet{Cummings15:Privacy} have studied optimization problems with private constraints. They focus on linear programs where the constraint matrix $\A$, constraint vector $\vec{b}$, and linear objective function may depend on private data. These papers provide algorithms that are allowed to violate the constraints, but they guarantee that each constraint will not be violated by more than some amount, denoted $\alpha$, with high probability. Knowing this, an analyst could decrease each constraint by a factor of $\alpha$, and then be guaranteed that with high probability, the constraints will not be violated. Compared to that approach, our algorithm has several notable advantages. First, it is not \emph{a priori} clear what the loss in the objective value will be using their techniques, whereas we provide a simple approach with upper and lower bounds on the objective value loss that match up to logarithmic factors. Second, that approach only applies to linear programming, whereas we study the more general problem of linearly-constrained optimization. Finally, we guarantee that the constraints will not be violated with probability 1, whereas that approach would only provide a high probability guarantee. In Appendix~\ref{app:related}, we provide additional comparisons with \citeauthor{Hsu14:Privately}'s analysis (namely, the dichotomy between \emph{high-} and \emph{low-sensitivity} linear programs).

\paragraph{Differentially private combinatorial optimization.}
Several papers have studied differentially private combinatorial optimization~\citep{Gupta10:Differentially, Hsu16:Private}, which is a distinct problem from ours, since most combinatorial optimization problems cannot be formulated only using linear constraints. \citet{Hsu16:Private} study a private variant of a classic allocation problem: there are $n$ agents and $k$ goods, and the agents' values for all $2^k$ bundles of the goods are private. The goal is to allocate the goods among the agents in order to maximize social welfare, while maintaining differential privacy. This is similar but distinct from the transportation problem from Figure~\ref{fig:transportation}: if we were to follow the formulation from \citet{Hsu16:Private}, the transportation costs would be private, whereas in our setting, the transportation costs are public but the total demand of each hospital is private.

\section{Differential privacy definition}
To define differential privacy (DP), we first formally introduce the notion of a neighboring database: two databases $D, D' \subseteq \cX$ are \emph{neighboring}, denoted $D \sim D'$, if they differ on any one record $(|D\ \Delta\ D'| = 1)$. We use the notation $\vec{x}(D) \in \R^n$ to denote the random variable corresponding to the vector that our algorithm releases (non-trivial DP algorithms are, by necessity, randomized). Given privacy parameters $\epsilon \geq 0$ and $\delta \in [0,1]$, the algorithm satisfies \emph{$(\epsilon, \delta)$-differential privacy ($(\epsilon, \delta)$-DP)} if for any neighboring databases $D, D'$ and any subset $V \subseteq \R^n$, \[\Pr[\vec{x}(D) \in V] \leq e^{\epsilon} \Pr[\vec{x}(D') \in V] + \delta.\]

\section{Multi-dimensional optimization}\label{sec:multi_dim}
Our goal is to privately solve multi-dimensional optimization problems of the form \begin{equation}\max_{\vec{x} \in \R^n} \left\{g(\vec{x}) : \A\vec{x} \leq \vec{b}(D)\right\},\label{eq:original_opt}\end{equation} where $\vec{b}(D) = \left(b(D)_1, \dots, b(D)_m\right)$ is a vector in $\R^m$ and $g$ is an $L$-Lipschitz function according to an $\ell_q$-norm $||\cdot||_{q}$ for $q \geq 1$. 
        Preserving privacy while ensuring the constraints are always satisfied is impossible if the feasible regions change drastically across databases. For example, if $D$ and $D'$ are neighboring databases with disjoint feasible regions, there is no $(\epsilon, \delta)$-DP mechanism that always satisfies the constraints with $\delta < 1$ (see Lemma~\ref{lem:disjoint} in Appendix~\ref{app:multi_dim}).
        To circumvent this impossibility, we assume that the intersection of the feasible regions across databases is nonempty. This is satisfied, for example, if the origin is always feasible.
        \begin{assumption}\label{assumption:nonempty}
The set $S^* := \bigcap_{D \subseteq \cX} \left\{\vec{x} : \A \vec{x} \leq \vec{b}(D) \right\}$ is non-empty.
\end{assumption}

In our approach, we map each vector $\vec{b}(D)$ to a random variable $\bar{\vec{b}}(D) \in \R^m$ and release \begin{equation}\vec{x}(D) \in \argmax_{\vec{x} \in \R^n}\left\{g(\vec{x})  :
              \A \vec{x} \leq \bar{\vec{b}}(D)\right\}.\label{eq:private_LP}\end{equation}
To formally describe our approach, we use the notation $\Delta = \max_{D \sim D'} \norm{\vec{b}(D) - \vec{b}(D')}_1$ to denote the constraint vector's \emph{sensitivity}.
We define the $i^{th}$ component of $\bar{\vec{b}}(D)$ to be $\bar{b}(D)_i = \max\left\{b(D)_i - s + \eta_i, b_i^*\right\},$ where $s = \frac{\Delta}{\epsilon} \ln \left(\frac{m\left(e^{\epsilon} - 1\right)}{\delta} + 1\right)$, $\eta_i$ is drawn from the truncated Laplace distribution with support $[-s,s]$ and scale $\frac{\Delta}{\epsilon}$, and $b_i^* = \inf_D\left\{b(D)_i\right\}$. In Lemmas~\ref{lem:feasible} and \ref{lem:intersection} in Appendix~\ref{app:multi_dim}, we prove that $S^* = \left\{\vec{x} : \A \vec{x} \leq \left(b_1^*, \dots, b_m^*\right)\right\},$ which allows us to prove that Equation~\eqref{eq:private_LP} is feasible.

First, we prove that our algorithm satisfies differential privacy. We use the notation $\vec{\eta} = \left(\eta_1, \dots, \eta_m\right)$ to denote a random vector where each component is drawn i.i.d. from the truncated Laplace distribution with support $[-s,s]$ and scale $\frac{\Delta}{\epsilon}$. We also use the notation $\vec{b}(D) - s + \vec{\eta} = \left(b(D)_1 - s + \eta_1, \dots, b(D)_m- s + \eta_m\right)$. The proof of the following theorem is in Appendix~\ref{app:multi_dim}. 

\begin{restatable}{theorem}{private}\label{thm:private}
	The mapping $D \mapsto \vec{b}(D) - s + \vec{\eta}$ preserves $(\epsilon, \delta)$-differential privacy.
\end{restatable}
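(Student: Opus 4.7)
The plan is to analyze the density of $\vec{q}(D) := \vec{b}(D) - s + \vec{\eta}$ directly, exploiting that the truncated Laplace noise is independent across coordinates. For any measurable $V \subseteq \R^m$ and neighboring $D \sim D'$, I would decompose $\R^m$ into the ``good'' set $G$ consisting of points in the intersection of the supports of $\vec{q}(D)$ and $\vec{q}(D')$, and its complement $B$. The proof then splits into showing (i) the pointwise density ratio on $G$ is bounded by $e^{\epsilon}$, which yields $\Pr[\vec{q}(D) \in V \cap G] \leq e^{\epsilon}\Pr[\vec{q}(D') \in V]$, and (ii) the ``spillover'' mass $\Pr[\vec{q}(D) \in B]$ is at most $\delta$. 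Combining these gives $(\epsilon,\delta)$-DP in the usual way.

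For part (i), since the $\eta_i$ are i.i.d., the density of $\vec{q}(D)$ at $\vec{y}$ factorizes as $\prod_i T(y_i - b(D)_i + s)$, where $T(u) \propto \exp(-|u|\epsilon/\Delta)\,\mathbf{1}[u \in [-s,s]]$ is the truncated Laplace density. On $G$, the normalizing constants cancel and the ratio equals $\prod_i \exp\!\left(\tfrac{\epsilon}{\Delta}\bigl(|y_i - b(D')_i + s| - |y_i - b(D)_i + s|\bigr)\right)$. Applying the reverse triangle inequality coordinatewise and then invoking the $\ell_1$ sensitivity bound $\sum_i |b(D)_i - b(D')_i| \leq \Delta$ yields the $e^{\epsilon}$ bound.

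For part (ii), I would observe that $B$ is contained in $\bigcup_i \{\vec{y} : y_i \notin [b(D')_i - 2s, b(D')_i]\}$, and use support considerations to show that only one tail event matters per coordinate: if $d_i := b(D)_i - b(D')_i > 0$, the only relevant bad event is $q_i(D) > b(D')_i$, i.e., $\eta_i > s - d_i$, while for $d_i < 0$ the symmetric lower-tail event applies. By the symmetry of the truncated Laplace, both cases reduce to $\Pr[\eta_i > s - |d_i|]$, which by direct integration equals $(e^{|d_i|\epsilon/\Delta} - 1)/(2(e^{s\epsilon/\Delta} - 1))$. A union bound together with $|d_i| \leq \Delta$ yields a total bad-set mass of at most $m(e^{\epsilon} - 1)/(2(e^{s\epsilon/\Delta} - 1))$, and substituting the prescribed value of $s$ makes this bound at most $\delta$.

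I expect the most delicate step to be part (ii): correctly using the symmetry of the truncated Laplace to collapse the two-sided tail analysis into the single quantity $\Pr[\eta_i > s - |d_i|]$, tracking that only one side of the support mismatch occurs per coordinate depending on $\mathrm{sign}(d_i)$, and computing the normalizing constant so that the inequality yields the stated closed form for $s$. The ratio argument in part (i) is the standard Laplace-mechanism computation adapted to the shifted, truncated support, and is a direct consequence of the reverse triangle inequality and the $\ell_1$ sensitivity assumption.
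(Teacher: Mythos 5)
Your proposal follows essentially the same route as the paper's proof: the same good/bad decomposition of the output space, the same pointwise density-ratio bound of $e^{\epsilon}$ on the intersection of the supports via the reverse triangle inequality and the $\ell_1$ sensitivity, and the same tail-mass calculation for the spillover set. The only difference is that your part (ii) counts just one tail per coordinate (determined by the sign of $b(D)_i - b(D')_i$) where the paper bounds both tails, which is a valid minor sharpening yielding $\delta/2$ in place of $\delta$ but does not change the argument or the choice of $s$.
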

	
Since differential privacy is immune to post-processing~\citep{Dwork14:Algorithmic}, Theorem~\ref{thm:private} implies our algorithm is differentially private.

\begin{cor}
 The mapping $D \mapsto \vec{x}(D)$ is $(\epsilon,\delta)$-differentially private.
 \end{cor}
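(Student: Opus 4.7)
The plan is to derive this corollary as an immediate consequence of Theorem~\ref{thm:private} together with the closure of differential privacy under post-processing~\citep{Dwork14:Algorithmic}. Specifically, I would exhibit a (possibly randomized) function $F$, independent of $D$, such that $\vec{x}(D) = F(\vec{b}(D) - s + \vec{\eta})$.

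First I would peel off the construction of $\bar{\vec{b}}(D)$ from $\vec{b}(D) - s + \vec{\eta}$. Because $b_i^* = \inf_D b(D)_i$ is determined by the family of databases and does not itself depend on the particular input $D$, the map $\vec{v} \mapsto \bigl(\max\{v_1, b_1^*\}, \dots, \max\{v_m, b_m^*\}\bigr)$ is a deterministic function that, when applied coordinatewise to $\vec{b}(D) - s + \vec{\eta}$, yields exactly $\bar{\vec{b}}(D)$. Hence $\bar{\vec{b}}(D)$ is a post-processing of the $(\epsilon,\delta)$-DP quantity from Theorem~\ref{thm:private}, so $D \mapsto \bar{\vec{b}}(D)$ is itself $(\epsilon,\delta)$-DP.

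Next I would observe that the map producing $\vec{x}(D)$ from $\bar{\vec{b}}(D)$ is likewise independent of $D$: the matrix $\A$ and the objective $g$ are public, fixed quantities, and $\vec{x}(D)$ is obtained by solving $\max\{g(\vec{x}) : \A \vec{x} \leq \bar{\vec{b}}(D)\}$. Defining $F(\vec{c}) \in \argmax_{\vec{x}}\{g(\vec{x}) : \A \vec{x} \leq \vec{c}\}$ (with any tie-breaking rule that depends only on $\A$, $g$, $\vec{c}$, and possibly fresh independent randomness) gives a valid post-processing map, and $\vec{x}(D) = F(\bar{\vec{b}}(D))$.

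Applying the post-processing property of differential privacy to the composition then yields that $D \mapsto \vec{x}(D)$ is $(\epsilon,\delta)$-DP. The only subtlety worth flagging is ensuring that the tie-breaking rule in the $\argmax$ is independent of $D$ given $\bar{\vec{b}}(D)$; this is immediate under any fixed deterministic or externally randomized rule, so no real obstacle arises. The proof should fit in one short paragraph.
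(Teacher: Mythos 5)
Your proposal is correct and matches the paper's argument, which simply invokes the post-processing immunity of differential privacy applied to Theorem~\ref{thm:private}; your version just spells out explicitly that both the coordinatewise $\max\{\cdot, b_i^*\}$ step and the constrained optimization step are database-independent maps. No gaps.
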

 
We next provide a bound on the quality of our algorithm, which measures the difference between the optimal solution $\max_{\vec{x} \in \R^n}\left\{g(\vec{x}) : \A \vec{x} \leq \vec{b}(D)\right\}$ and the solution our algorithm returns $g(\vec{x}(D))$.
 Our bound depends on the ``niceness'' of the matrix $\A$, as quantified by the linear system's \emph{condition number}~\citep{Li93:Sharp} $\alpha_{p, q}(\A)$. \citet{Li93:Sharp} proved that this value sharply characterizes the extent to which a change in the vector $\vec{b}$ causes a change in the feasible region, so it makes sense that it appears in our quality guarantees. Given a norm $||\cdot||_{p}$ on $\R^m$ where $p \geq 1$, we use the notation $||\cdot||_{p^*}$ to denote the dual norm where $\frac{1}{p} + \frac{1}{p^*} = 1$. The linear system's \emph{condition number} is defined as
\[\alpha_{p, q}(\A) = \sup_{\vec{u} \geq \vec{0}}\left\{\norm{\vec{u}}_{p^*} : \begin{array}{l}\norm{\A^T\vec{u}}_{q^*} = 1 
\text{ and the rows of } \A \text{ corresponding to}\\
\text{the nonzero entries of } \vec{u} \text{ are linearly independent}\end{array}\right\}.\]
 When $\A$ is nonsingular and $p = q = 2$, $\alpha_{p, q}(\A)$ is at most the inverse of the minimum singular value, $\sigma_{\min}(\A)^{-1}$. This value $\sigma_{\min}(\A)^{-1}$ is closely related to the matrix $\A$'s condition number (which is distinct from $\alpha_{p, q}(\A)$, the linear system's condition number), which roughly measures the rate at which the solution to $\A \vec{x} = \vec{b}$ changes with respect to a change in $\vec{b}$.
 
 We now prove our quality guarantee, which bounds the difference between the optimal solution to the original optimization problem (Equation~\eqref{eq:original_opt}) and that of the privately transformed problem (Equation~\eqref{eq:private_LP}).
 
 \begin{theorem}\label{thm:main}
 	Suppose Assumption~\ref{assumption:nonempty} holds and the function $g: \R^n \to \R$ is $L$-Lipschitz with respect to an $\ell_q$-norm $\norm{\cdot}_{q}$ on $\R^n$. With probability 1, \[\max_{\vec{x} \in \R^n}\left\{g(\vec{x}) : \A \vec{x} \leq \vec{b}(D)\right\} - g(\vec{x}(D))\leq\frac{2L\Delta}{\epsilon} \cdot \inf_{p \geq 1} \left\{\alpha_{p,q}(\A)\sqrt[p]{m} \right\} \cdot \ln \left(\frac{m \left(e^{\epsilon} - 1\right)}{\delta} + 1\right).\]
 \end{theorem}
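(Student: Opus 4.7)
The plan is to bound the gap in two steps: first, control how much the constraint vector shrinks; second, translate that shrinkage into a bound on the distance between $\vec{x}^*$ and the new feasible region via the Li--Hoffman condition number.

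The first step is to verify the basic geometry of the transformed problem. Since each $\eta_i$ lies in $[-s,s]$, we have $b(D)_i - s + \eta_i \leq b(D)_i$, and since $b^*_i \leq b(D)_i$ by definition of $b^*_i$, taking a max with $b^*_i$ preserves the inequality $\bar{b}(D)_i \leq b(D)_i$. Thus the feasible region of the private problem (Equation~\eqref{eq:private_LP}) is contained in the original feasible region, so $g(\vec{x}(D)) \leq g(\vec{x}^*)$ where $\vec{x}^* \in \argmax\{g(\vec{x}) : \A \vec{x} \leq \vec{b}(D)\}$. A symmetric case analysis (splitting on whether the max in the definition of $\bar{b}(D)_i$ is achieved by $b(D)_i - s + \eta_i$ or by $b^*_i \geq b(D)_i - s + \eta_i \geq b(D)_i - 2s$) shows that in either case $0 \leq b(D)_i - \bar{b}(D)_i \leq 2s$, so $\|\vec{b}(D) - \bar{\vec{b}}(D)\|_p \leq 2s \, m^{1/p}$ for every $p \geq 1$.

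The second step is the core of the argument, and the place where the condition number enters. I would invoke the sharp Li--Hoffman error bound: for any $\vec{x}$, the $\ell_q$-distance from $\vec{x}$ to $\{\vec{y} : \A \vec{y} \leq \bar{\vec{b}}(D)\}$ is at most $\alpha_{p,q}(\A) \cdot \|(\A \vec{x} - \bar{\vec{b}}(D))_+\|_p$. Applying this to $\vec{x} = \vec{x}^*$ and using $\A\vec{x}^* \leq \vec{b}(D)$ together with the Step 1 bound gives
\[
d_q\bigl(\vec{x}^*, \{\vec{y} : \A\vec{y} \leq \bar{\vec{b}}(D)\}\bigr) \leq \alpha_{p,q}(\A) \cdot \|\vec{b}(D) - \bar{\vec{b}}(D)\|_p \leq 2s\,\alpha_{p,q}(\A)\,m^{1/p}.
\]
Note that this bound is vacuous unless the private feasible region is nonempty; nonemptiness follows from Assumption~\ref{assumption:nonempty} together with the referenced Lemmas~\ref{lem:feasible} and~\ref{lem:intersection}, since $\bar{\vec{b}}(D) \geq (b^*_1,\dots,b^*_m)$ by construction, ensuring $S^* \subseteq \{\vec{y} : \A\vec{y} \leq \bar{\vec{b}}(D)\}$.

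The final step combines Lipschitzness with optimality. Let $\vec{y}$ be a point in the private feasible region achieving (or approximating) the infimum distance above. By $L$-Lipschitzness of $g$ with respect to $\|\cdot\|_q$,
\[
g(\vec{x}^*) - g(\vec{y}) \leq L \cdot \|\vec{x}^* - \vec{y}\|_q \leq 2Ls\,\alpha_{p,q}(\A)\,m^{1/p},
\]
and since $\vec{x}(D)$ maximizes $g$ over the private feasible region, $g(\vec{x}(D)) \geq g(\vec{y})$. Chaining these inequalities, substituting $s = \frac{\Delta}{\epsilon}\ln\bigl(\frac{m(e^\epsilon - 1)}{\delta} + 1\bigr)$, and taking the infimum over $p \geq 1$ yields the claimed bound. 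The main obstacle is the correct invocation of the Li--Hoffman inequality --- specifically, ensuring that $\alpha_{p,q}(\A)$ as defined in the paper is the right constant in the Hoffman-type error bound for the $(\ell_p,\ell_q)$ pair, and that the bound applies when the feasible region is given by inequalities $\A\vec{x} \leq \vec{b}$ (rather than equalities), so that only the positive part of $\A\vec{x} - \bar{\vec{b}}(D)$ contributes. Since $\A\vec{x}^* \leq \vec{b}(D)$, the positive part is coordinatewise at most $\vec{b}(D) - \bar{\vec{b}}(D)$, and this is what makes the Step 2 bound directly applicable.
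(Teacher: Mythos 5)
Your proposal is correct and follows essentially the same route as the paper's proof: bound the coordinatewise shrinkage $0 \leq b(D)_i - \bar{b}(D)_i \leq 2s$ to get $\|\vec{b}(D) - \bar{\vec{b}}(D)\|_p \leq 2s\,m^{1/p}$, invoke the Li condition-number (Hoffman-type) bound to convert this into an $\ell_q$-distance from $\vec{x}^*$ to the shrunken feasible region, and finish with $L$-Lipschitzness and optimality of $\vec{x}(D)$. The only cosmetic difference is that you state the Hoffman bound via the positive-part residual $\|(\A\vec{x}^* - \bar{\vec{b}}(D))_+\|_p$ and then dominate it by $\|\vec{b}(D) - \bar{\vec{b}}(D)\|_p$, whereas the paper cites the inequality directly in the latter form.
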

 
 \begin{proof}
 	Let $\vec{b}$ be an arbitrary vector in the support of $\bar{\vec{b}}(D)$ and let $S =  \left\{\vec{x} : \A \vec{x} \leq \vec{b}\right\}$. Let $\vec{x}^*$ be an arbitrary point in $\argmax_{\vec{x} \in \R^n}\left\{g(\vec{x}) : \A \vec{x} \leq \vec{b}(D)\right\}$ and let $\bar{\vec{x}}$ be an arbitrary vector in $S$. We know that
 	\begin{align*}
 	&\max_{\vec{x} \in \R^n} \left\{g(\vec{x}) : \A \vec{x} \leq \vec{b}(D)\right\} - \max_{\vec{x} \in \R^n} \left\{g(\vec{x}) : \A \vec{x} \leq \vec{b}\right\}\\
 	=\text{ }&  g(\vec{x}^*) - \max_{\vec{x} \in \R^n} \left\{g(\vec{x}) : \A \vec{x} \leq \vec{b}\right\}\\
 	=\text{ }&g(\vec{x}^*) - g(\bar{\vec{x}}) + g(\bar{\vec{x}}) - \max_{\vec{x} \in \R^n} \left\{g(\vec{x}) : \A \vec{x} \leq \vec{b}\right\}.\end{align*} Since $\bar{\vec{x}} \in S = \{\vec{x} : \A \vec{x} \leq \vec{b}\}$, we know that $g(\bar{\vec{x}}) \leq \max_{\vec{x} \in \R^n} \left\{g(\vec{x}) : \A \vec{x} \leq \vec{b}\right\}.$ Therefore, \begin{equation}\max_{\vec{x} \in \R^n} \left\{g(\vec{x}) : \A \vec{x} \leq \vec{b}(D)\right\} - \max_{\vec{x} \in \R^n} \left\{g(\vec{x}) : \A \vec{x} \leq \vec{b}\right\}\leq  g(\vec{x}^*) - g(\bar{\vec{x}}) \leq L \cdot \norm{\vec{x}^* - \bar{\vec{x}}}_{q}. \label{eq:bound_M}
 	\end{equation}
 To simplify notation, let
 	$M
 	=g(\vec{x}^*) - \max \{g(\vec{x}) : \A \vec{x} \leq \vec{b}\}.$
 	Equation~\eqref{eq:bound_M} shows that for every $\bar{\vec{x}} \in S$, $\frac{M}{L} \leq \norm{\vec{x}^* - \bar{\vec{x}}}_{q}$. Meanwhile, from work by \citet{Li93:Sharp}, we know that for any $\ell_p$-norm $\norm{\cdot}_p$, \begin{equation}\inf_{\bar{\vec{x}} \in S}\norm{\vec{x}^* - \bar{\vec{x}}}_{q} \leq \alpha_{p, q}(\A) \cdot \norm{\vec{b}(D) - \vec{b}}_{p}.\label{eq:Li_opt}\end{equation} By definition of the infimum, this means that 
 	$M
 	\leq L  \cdot\alpha_{p, q}(\A) \cdot \norm{\vec{b}(D) - \vec{b}}_{p}.$
 	This inequality holds for any $\vec{b}$ in the support of $\bar{\vec{b}}(D)$ and with probability 1, \[\norm{\vec{b}(D) - \bar{\vec{b}}(D)}_p \leq \frac{2 \Delta \sqrt[p]{m}}{\epsilon}\ln \left(\frac{m\left(e^{\epsilon} - 1\right)}{\delta} + 1\right).\] Therefore, the theorem holds.
 	\end{proof}
 
  In the following examples, we instantiate Theorem~\ref{thm:main} in several specific settings.
 
 \begin{example}[Nonsingular constraint matrix]\label{ex:nonsingular}
 	When $\A$ is nonsingular, setting $\norm{\cdot}_p = \norm{\cdot}_q = \norm{\cdot}_2$ implies \[\max_{\vec{x} \in \R^n}\left\{g(\vec{x}) : \A \vec{x} \leq \vec{b}(D)\right\} - g(\vec{x}(D))\leq \frac{2 \cdot \Delta \cdot \sqrt{m} \cdot L}{\epsilon \cdot \sigma_{\min}(\A)} \ln \left(\frac{m\left(e^{\epsilon} - 1\right)}{\delta} + 1\right).\]
 \end{example}

 \begin{example}[Strongly stable linear inequalities]
 	We can obtain even stronger guarantees when the system of inequalities $\A \vec{x} < \vec{0}$ has a solution. In that case, the set $\{\vec{x} : \A \vec{x} \leq \vec{b}\}$ is non-empty for any vector $\vec{b}$~\citep{Mangasarian87:Lipschitz}, so we need not make Assumption~\ref{assumption:nonempty}. Moreover, when $\norm{\cdot }_{q}$ and $\norm{\cdot}_{p}$ both equal the $\ell_{\infty}$-norm and $\A \vec{x} < \vec{0}$ has a solution, we can replace $\alpha_{p, q}(\A)$ in Theorem~\ref{thm:main} with the following solution to a linear program: \[\bar{\alpha}(\A) = \max_{(\vec{u}, \vec{z}) \in \R^{m + n}}\left\{ \vec{1} \cdot \vec{u} : \vec{1} - \vec{z} \leq \vec{u}^\top \A \leq \vec{z},	\vec{u} \geq \vec{0},\text{ and }\vec{1} \cdot \vec{z} = 1\right\}.\] This is because in the proof of Theorem~\ref{thm:main}, we can replace Equation~\eqref{eq:Li_opt} with $\inf_{\bar{\vec{x}} \in S}\norm{\vec{x}^* - \bar{\vec{x}}}_{q} \leq \bar{\alpha}(\A) \cdot \norm{\vec{b}(D) - \vec{b}}_{p}$~\citep{Mangasarian87:Lipschitz}.
 \end{example}
 
 	We now present our main result. We prove that the quality guarantee from Theorem~\ref{thm:main} is tight up to a factor of $O(\log m)$.
 \begin{restatable}{theorem}{lb}\label{thm:lb}
 	Let $\A \in \R^{m \times m}$ be an arbitrary diagonal matrix with positive diagonal entries and let $g : \R^m \to \R$ be the function $g(\vec{x}) = \langle 1, \vec{x}\rangle$.
 	For any $\Delta > 0$, there exists
a mapping from databases $D \subseteq \cX$ to vectors $\vec{b}(D) \in \R^m$
  such that:
 	\begin{enumerate}
 		\item The sensitivity of $\vec{b}(D)$ equals $\Delta$, and
 		\item For any $\epsilon > 0$ and $\delta \in (0, 1/2]$, if $\vec{\mu}$ is an $(\epsilon, \delta)$-differentially private mechanism such that $\A\vec{\mu}(D) \leq \vec{b}(D)$ with probability 1, then \[\max\left\{g(\vec{x}) : \A\vec{x} \leq \vec{b}(D)\right\} - \E[g(\vec{\mu}(D))]\geq\frac{\Delta}{4\epsilon} \cdot \inf_{p \geq 1} \left\{\alpha_{p,1}(\A)\sqrt[p]{m} \right\} \cdot \ln \left(\frac{e^{\epsilon} - 1}{2\delta} + 1\right).\]
 	\end{enumerate}
 \end{restatable}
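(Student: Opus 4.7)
My strategy is to reduce the $m$-dimensional lower bound to a one-dimensional bound applied separately to each coordinate, exploiting that $\A$ is diagonal so the feasibility constraints decouple. I will take the data universe to consist of records labeled by type $i \in \{1,\dots,m\}$, represent each database $D$ by the counts $(n_1,\dots,n_m)$ of records of each type, and set $b_i(D) = n_i\Delta$. A single-record change alters exactly one count by one, so $\norm{\vec{b}(D)-\vec{b}(D')}_1 = \Delta$ and the sensitivity equals $\Delta$. Let $K = \frac{1}{\epsilon}\ln\bigl(\frac{e^\epsilon-1}{2\delta}+1\bigr)$ and $N = \lfloor K\rfloor$; the hypothesis $\delta \leq 1/2$ yields $K \geq 1$, hence $N \geq 1$ and $N \geq K/2$. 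I would then show that the loss at the target database $D^* = (N,\dots,N)$ already meets the claimed lower bound.

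\textbf{Per-coordinate one-dimensional bound.} Fix a coordinate $i$ and consider the path of neighbors $D^i_0, D^i_1, \dots, D^i_N = D^*$, where $D^i_k$ has $n_i = k$ and $n_j = N$ for $j \neq i$. The marginal $\mu_i$ inherits $(\epsilon,\delta)$-DP by post-processing, and by group privacy it is $(N\epsilon, \delta'_N)$-DP between $D^i_0$ and $D^i_N$, with $\delta'_N = \frac{e^{N\epsilon}-1}{e^\epsilon-1}\delta \leq 1/2$ by the choice of $N$. Because $b_i(D^i_0) = 0$ forces $\mu_i(D^i_0) \leq 0$ almost surely, applying group privacy to the event $\{\mu_i > x\}$ for any $x \geq 0$ gives $\Pr[\mu_i(D^*) > x] \leq e^{N\epsilon}\cdot 0 + \delta'_N \leq 1/2$. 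Combined with the feasibility bound $\mu_i(D^*) \leq N\Delta/a_i$, this yields $\E[\mu_i(D^*)] \leq \delta'_N\cdot N\Delta/a_i \leq N\Delta/(2a_i)$, so the coordinate-$i$ loss at $D^*$ is at least $N\Delta/(2a_i)$.

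\textbf{Summing and identifying the condition number.} Summing over $i$ yields a total loss at $D^*$ of at least $\frac{N\Delta}{2}\sum_{i=1}^m 1/a_i \geq \frac{K\Delta}{4}\sum_{i=1}^m 1/a_i$. The final ingredient is the analytic identity $\inf_{p\geq 1}\alpha_{p,1}(\A)\sqrt[p]{m} = \sum_{i=1}^m 1/a_i$ for diagonal $\A$, which I would verify by computing $\alpha_{p,1}(\A) = \bigl(\sum_i a_i^{-p^*}\bigr)^{1/p^*}$ directly from the definition, applying H\"older to get $\sum_i 1/a_i \leq \alpha_{p,1}(\A)\sqrt[p]{m}$ for every $p \geq 1$, and observing that the bound is attained in the limit $p \to \infty$. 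Substituting produces the claimed lower bound of $\frac{\Delta}{4\epsilon}\cdot\inf_{p\geq 1}\alpha_{p,1}(\A)\sqrt[p]{m}\cdot\ln\bigl(\frac{e^\epsilon-1}{2\delta}+1\bigr)$.

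\textbf{Main obstacle.} The delicate step is the per-coordinate argument. The naive attempt, comparing $\Pr[\mu_i(D^*) \leq 0]$ with $\Pr[\mu_i(D^i_0) \leq 0] = 1$ via group privacy, loses a factor of $e^{N\epsilon}$ and gives a vacuous bound. The key move is to apply group privacy in the opposite direction, to the event $\{\mu_i > x\}$ for $x \geq 0$; because the deterministic feasibility constraint makes the probability under $\mu_i(D^i_0)$ exactly zero, the $\delta$-term alone controls $\Pr[\mu_i(D^*) > x]$, so the $\delta \leq 1/2$ hypothesis propagates to a factor-$1/2$ deterministic bound on $\E[\mu_i(D^*)]$, matching Theorem~\ref{thm:main} up to a constant.
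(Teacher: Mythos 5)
Your proof is correct and takes essentially the same route as the paper's: the same hard instance $\vec{b}(D_{\vec{d}}) = \Delta \vec{d}$, the same threshold $t = \frac{1}{\epsilon}\ln\left(\frac{e^{\epsilon}-1}{2\delta}+1\right)$ with $\lfloor t\rfloor \geq t/2$, and the same key estimate that each coordinate exceeds its constraint minus $\lfloor t \rfloor$ steps with probability at most $1/2$ --- your black-box group-privacy invocation is exactly what the paper's Claim~\ref{claim:md_induction} proves by induction (exploiting that the event has probability zero $\lfloor t\rfloor$ neighbors away), and your coordinate-wise expectation bound is what the paper's $2^m$-rectangle partition and law of total expectation compute. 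The only substantive difference is that you establish the bound at the single database $(N,\dots,N)$ with nonnegative counts, whereas the paper indexes databases by all of $\Z^m$ and obtains the bound at every $D_{\vec{d}}$; your argument extends verbatim once the counts are allowed to range over $\Z$.
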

Since the objective function $g$ is 1-Lipschitz under the $\ell_1$-norm, this lower bound matches the upper bound from Theorem~\ref{thm:main} up to a factor of $O(\log m)$. The full proof of Theorem~\ref{thm:lb} is in Appendix~\ref{app:multi_dim}.
 
 \begin{proof}[Proof sketch of Theorem~\ref{thm:lb}]
 		For ease of notation, let $t = \frac{1}{\epsilon} \ln \left(\frac{e^{\epsilon} - 1}{2\delta} + 1\right).$ Notice that $\delta \leq \frac{1}{2}$ implies $t \geq 1$. 
 	For each vector $\vec{d} \in \Z^m$, let $D_{\vec{d}}$ be a database where for any $\vec{d}, \vec{d}' \in \Z^m$, if $\norm{\vec{d} - \vec{d}'}_1 \leq 1$, then $D_{\vec{d}}$ and $D_{\vec{d}'}$ are neighboring. Let $\vec{b}\left(D_{\vec{d}}\right) = \Delta \vec{d}$ and let $a_1, \dots, a_m > 0$ be the diagonal entries of $\A$. Since $\A\vec{\mu}\left(D_{\vec{d}}\right) \leq \vec{b}\left(D_{\vec{d}}\right)$ with probability 1, $\vec{\mu}\left(D_{\vec{d}}\right)$ must be coordinate-wise smaller than $\Delta \left(\frac{d_1}{a_1}, \dots, \frac{d_m}{a_m}\right)$.
 
 We begin by partitioning the support of $\vec{\mu}\left(D_{\vec{d}}\right)$ so that we can analyze $\E\left[g\left(\vec{\mu}\left(D_{\vec{d}}\right)\right)\right]$ using the law of total expectation. We organize this partition using axis-aligned rectangles.
Specifically, for each index $i \in [m]$, let $S_{i}^0$ be the set of vectors $\vec{x} \in \R^m$ whose $i^{th}$ components are smaller than $\frac{\Delta}{a_i}\left(d_i - \lfloor t \rfloor\right)$: \[S_{i}^0 = \left\{\vec{x} \in \R^m : x_i \leq \frac{\Delta}{a_i}\left(d_i - \lfloor t \rfloor\right)\right\}.\] Similarly, let \[S_{i}^1 = \left\{\vec{x} \in \R^m : \frac{\Delta}{a_i}\left(d_i - \lfloor t \rfloor\right) < x_i \leq \frac{\Delta d_i}{a_i}\right\}.\]
See Figure~\ref{fig:partition} for an illustration of these regions.
\begin{figure}
    \centering
    \includegraphics{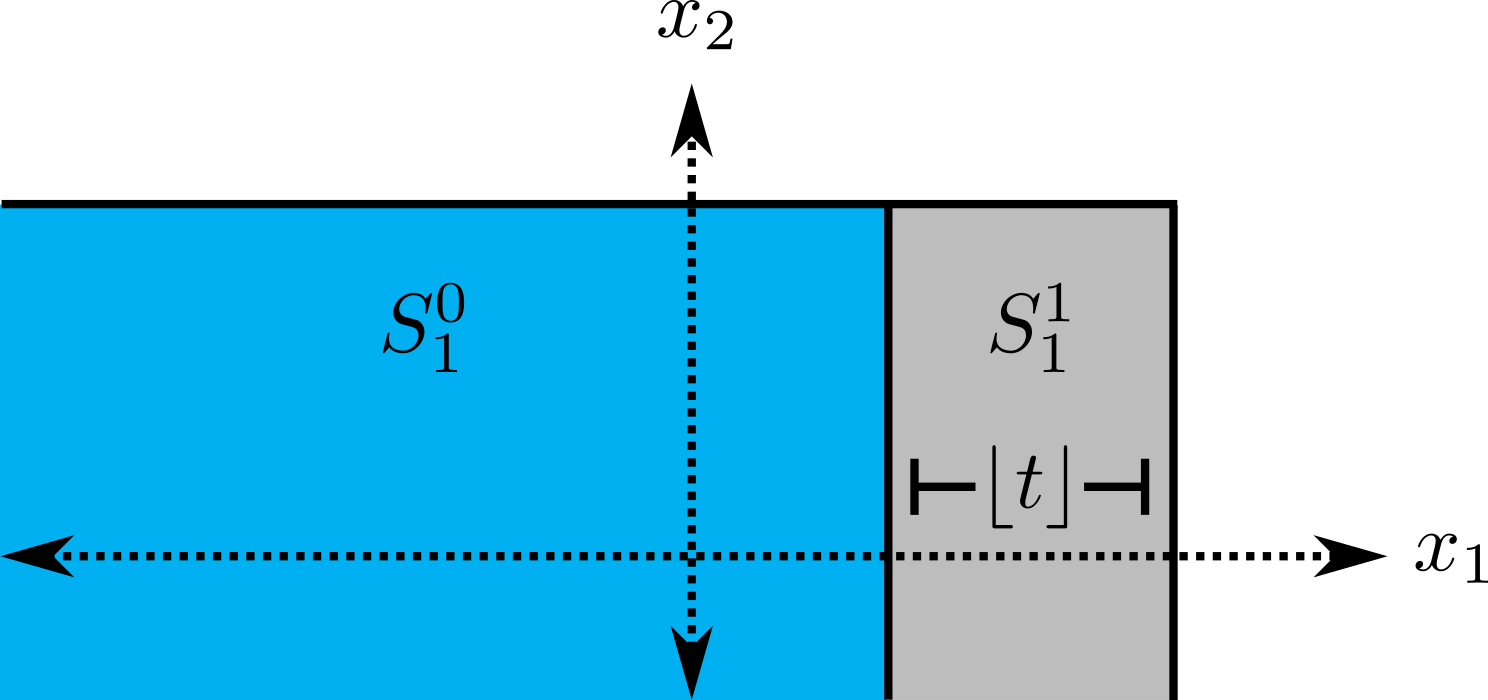}
    \caption{This figure illustrates the partition of $\R^2$ into $S_1^0$ (the left blue shaded region) and $S_1^1$ (the right grey shaded region). Assuming $\A$ is the identity matrix, the right vertical edge of $S_1^1$ lines up with $x_1 = d_1$ and the left vertical edge of $S_1^1$ lines up with $x_1 = d_1 - \lfloor t \rfloor$. The top horizontal edges of both $S_1^0$ and $S_1^1$ line up with $x_2 = d_2$.}
    \label{fig:partition}
\end{figure}
For any vector $\vec{I} \in \{0,1\}^m$, let $S_{\vec{I}} = \cap_{i = 1}^m S_i^{I_i}$. The sets $S_{\vec{I}}$ partition the support of $\vec{\mu}\left(D_{\vec{d}}\right)$ into rectangles.
Therefore, by the law of total expectation, \begin{equation}\E\left[g\left(\vec{\mu}\left(D_{\vec{d}}\right)\right)\right] = \sum_{{\vec{I}} \in \{0,1\}^m}\E\left[g\left(\vec{\mu}\left(D_{\vec{d}}\right)\right) \mid \vec{\mu}\left(D_{\vec{d}}\right) \in S_{{\vec{I}}}\right]\mathbb{P}\left[\vec{\mu}\left(D_{\vec{d}}\right) \in S_{{\vec{I}}}\right].\label{eq:law}\end{equation}
 
When we condition on the vector $\vec{\mu}\left(D_{\vec{d}}\right)$ being contained in a rectangle $S_{{\vec{I}}}$, our analysis of the expected value of $g\left(\vec{\mu}\left(D_{\vec{d}}\right)\right)$ is simplified.
 Suppose that $\vec{\mu}\left(D_{\vec{d}}\right) \in S_{{\vec{I}}}$ for some $\vec{I} \in \{0,1\}^m$. If $I_i = 0$, then we know that $\mu\left(D_{\vec{d}}\right)_i \leq \frac{\Delta}{a_i}\left(d_i - \lfloor t \rfloor\right)$. Meanwhile, if $I_i = 1$, then $\mu\left(D_{\vec{d}}\right)_i \leq \frac{\Delta d_i}{a_i}$ since $\A \vec{\mu}\left(D_{\vec{d}}\right) \leq \vec{b}\left(D_{\vec{d}}\right)$ with probability 1. Since $g(\vec{x}) = \langle \vec{1}, \vec{x} \rangle$, we have that for each $\vec{I} \in \{0,1\}^m$,
\[ \E\left[g\left(\vec{\mu}\left(D_{\vec{d}}\right)\right) \mid \vec{\mu}\left(D_{\vec{d}}\right) \in S_{{\vec{I}}}\right]\leq \sum_{i = 1}^m \frac{\Delta\left(d_i - \lfloor t \rfloor\right)}{a_i}\textbf{1}_{\{I_i = 0\}} + \frac{\Delta d_i}{a_i}\textbf{1}_{\{I_i = 1\}}=\sum_{i = 1}^m \frac{\Delta d_i}{a_i}-  \frac{\Delta\lfloor t \rfloor}{a_i}  \textbf{1}_{\{I_i = 0\}}.\]
 Combining this inequality with Equation~\eqref{eq:law} and rearranging terms, we are able to prove that \begin{equation}\E\left[g\left(\vec{\mu}\left(D_{\vec{d}}\right)\right)\right]\leq \Delta\sum_{i = 1}^m \frac{d_i}{a_i} - \Delta \lfloor t \rfloor \sum_{i = 1}^m\frac{1}{a_i}\Pr\left[\vec{\mu}\left(D_{\vec{d}}\right) \in S_i^0\right]\label{eq:bound_exp}\end{equation} (see the full proof in Appendix~\ref{app:multi_dim} for details).

We use the definition of differential privacy to show that for all $i \in [m]$, $\Pr\left[\vec{\mu}\left(D_{\vec{d}}\right) \in S_i^0\right] > \frac{1}{2}$, which allows us to simplify Equation~\eqref{eq:bound_exp}. Intuitively this holds since $\vec{\mu}\left(D_{\vec{d}}\right)$ cannot have too much probability mass in each set $S_i^1$, as there are neighboring databases that have zero probability mass in subsets of this region. More precisely, we show that $\Pr\left[\vec{\mu}\left(D_{\vec{d}}\right) \in S_i^0\right] > \delta \sum_{j = 0}^{\lfloor t \rfloor - 1} e^{\epsilon j}$. Our choice of $t$ then implies that $\Pr\left[\vec{\mu}\left(D_{\vec{d}}\right) \in S_i^0\right] > \frac{1}{2}$.

 	This inequality, Equation~\eqref{eq:bound_exp}, and the fact that $t \geq 1$ together imply that \[
 	\E\left[g\left(\vec{\mu}\left(D_{\vec{d}}\right)\right)\right] < \Delta\sum_{i = 1}^m \frac{d_i}{a_i} - \frac{\Delta t}{4}\sum_{i = 1}^m\frac{1}{a_i}.\] Since $\max\left\{g(\vec{x}) : \A \vec{x}\leq\vec{b}\left(D_{\vec{d}}\right) \right\} = \Delta \sum_{i= 1}^m \frac{d_i}{a_i},$ we have that
 	\[\max\left\{g(\vec{x}) : \A \vec{x}\leq\vec{b}\left(D_{\vec{d}}\right) \right\}  - \E\left[g\left(\vec{\vec{\mu}}\left(D_{\vec{d}}\right)\right)\right]\geq \frac{\Delta }{4\epsilon}\left(\sum_{i = 1}^m \frac{1}{a_i}\right) \ln \left(\frac{e^{\epsilon} - 1}{2\delta} + 1\right).\]
 	
Finally, we prove that $\inf_{p \geq 1}\alpha_{p,1}(\A) \sqrt[p]{m} \leq \alpha_{\infty,1}(\A)  = \sum_{i = 1}^m \frac{1}{a_i}$, which implies that the theorem statement holds.
 	Since $\A$ is diagonal,  \[\alpha_{\infty,1}(\A) = \sup_{\vec{u} \geq \vec{0}}\left\{\norm{\vec{u}}_{1} : u_{i}a_{i} \leq 1, \forall i \in [m]\right\} = \sum_{i = 1}^m \frac{1}{a_i}.\]  Moreover, since $\alpha_{\infty,1}(\A) \in \left\{\alpha_{p,1}(\A) \sqrt[p]{m} : p \geq 1\right\}$, we have that $\inf_{p \geq 1}\alpha_{p,1}(\A) \sqrt[p]{m} \leq \alpha_{\infty,1}(\A).$ Therefore,  the theorem statement holds.
 \end{proof}

This theorem demonstrates that our algorithm's loss (Theorem~\ref{thm:main}) is tight up to a factor of $O(\log m)$ among all  differentially private mechanisms.

\section{Experiments}\label{sec:experiments}
In this section, we present empirical evaluations of our algorithm in several settings: financial portfolio optimization and internet advertising.

\subsection{Portfolio optimization}\label{sec:portfolio} Suppose a set of individuals pool their money to invest in a set of $n$ assets over a period of time. The amount contributed by each individual is private, except to the trusted investment manager. Let $b(D)$ be the total amount of money the investors (represented by a database $D$) contribute.  We let $x_i$ denote the amount of asset $i$ held throughout the period, with $x_i$ in dollars, at the price at the beginning of the period. We adopt the classic \citet{Markowitz52:Portfolio} portfolio optimization model. The return of each asset is represented by the random vector $\vec{p} \in \R^n$, which has known mean $\bar{\vec{p}}$ and covariance $\Sigma$. Therefore with portfolio $\vec{x} \in \R^n$, the return $r$ is a (scalar) random variable with mean $\bar{\vec{p}} \cdot \vec{x}$ and variance $\vec{x}^{\top} \Sigma \vec{x}$. The choice of
a portfolio $\vec{x}$ involves a trade-off between the return's mean and variance. Given a minimum return $r_{min}$, the goal is to solve the following quadratic program while keeping the budget $b(D)$ private: \begin{equation}\begin{array}{ll}
\text{minimize} & \vec{x}^{\top} \Sigma \vec{x}\\
\text{such that} & \bar{\vec{p}} \cdot \vec{x} \geq r_{min}\\
& \vec{x} \cdot \vec{1} \leq b(D)\\
& \vec{x} \geq \vec{0}.
\end{array}\label{eq:portfolio}\end{equation}

We run experiments using real-world data from stocks included in the Dow Jones Industrial Average, compiled by~\citet{Bruni16:Real}. They collected weekly linear returns for 28 stocks over the course of 1363 weeks. The mean vector $\bar{\vec{p}} \in \R^{28}$ is the average of these weekly returns and the covariance matrix $\Sigma \in \R^{28 \times 28}$ is the covariance of the weekly returns.

\begin{figure}
	\centering
	\includegraphics[scale=.7]{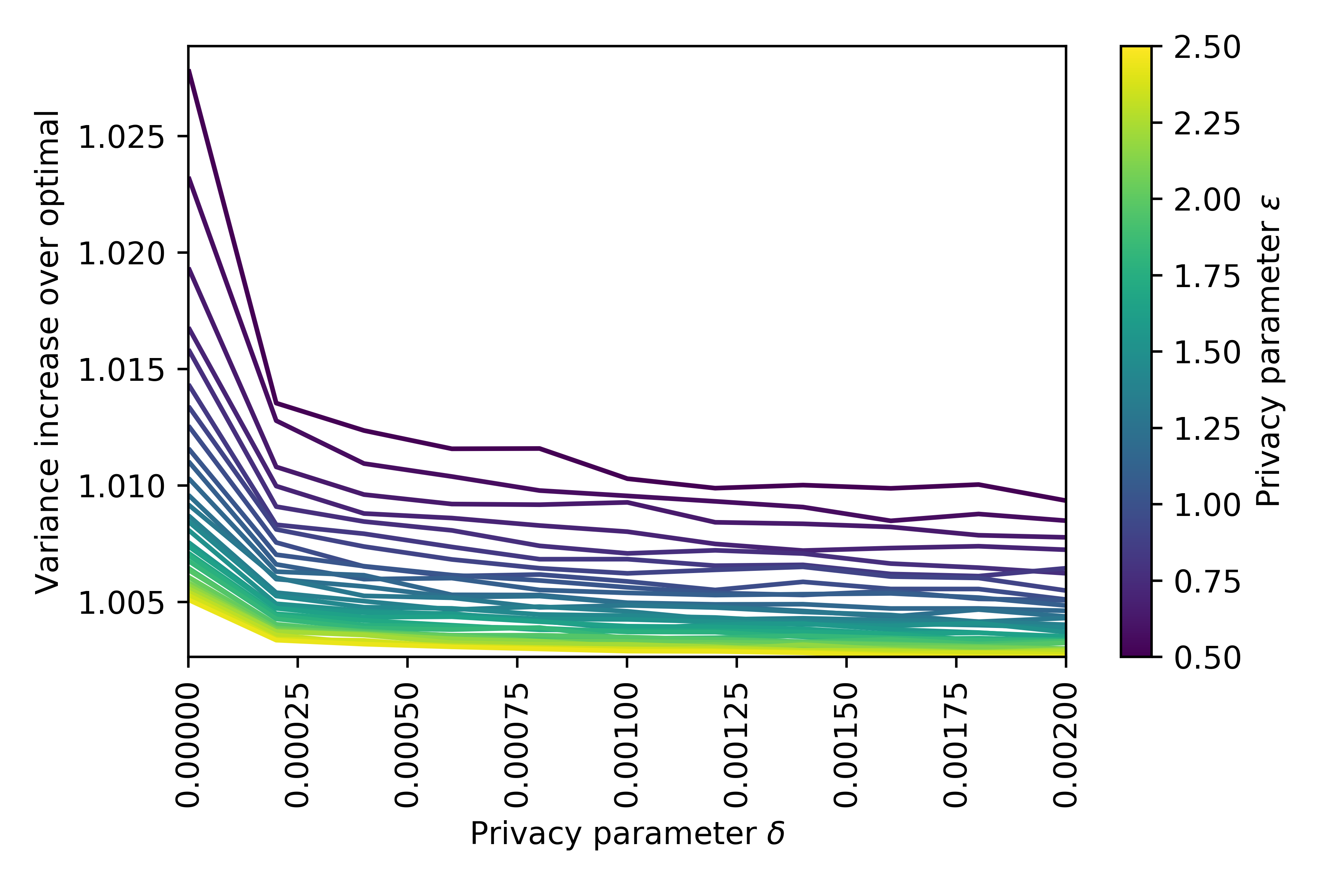}\centering
	\caption{Quality in the portfolio optimization application.  The plot shows the multiplicative increase in the objective function value of our algorithm's solution---for various choices of $\epsilon$ and $\delta$---over the objective function value of the optimal solution to the original optimization problem (Equation~\eqref{eq:portfolio}). Darker shading corresponds to lower values of $\epsilon$ and hence stronger privacy. See Section~\ref{sec:portfolio} for details.}
	\label{fig:quality}
\end{figure}

In Figure~\ref{fig:quality}, we analyze the quality of our algorithm. First, we set the number of individuals $n$ to be $1000$. Then, we define each element of the database (money given by individuals to an investor) as a draw from the uniform distribution between 0 and 1, so $b(D)$ equals the sum of these $n$ random variables. The sensitivity of $b(D)$ is therefore $\Delta = 1$. We set the minimum return $r_{min}$ to be $2.5$.
We calculate the objective value $v^* \in \R$ of the optimal solution to Equation~\eqref{eq:portfolio}. Then, for $\delta \in \left[\frac{1}{n^2}, 0.002\right]$ and $\epsilon \in [0.5, 2.5]$, we run our algorithm 50 times and calculate the average objective value $\hat{v}_{\epsilon, \delta} \in \R$ of the optimal solutions.

In Figure~\ref{fig:quality}, we plot $\frac{\hat{v}_{\epsilon, \delta}}{v^*}$. We see that even strict values for the privacy parameters do not lead to a significant degradation in the value of the objective function. For example, setting $\epsilon = 0.5$ and $\delta = 2.5 \cdot 10^{-4}$ increases the value of the objective function by about $1\%$.

In Appendix~\ref{app:experiments}, we perform the same experiment with the number $n$ of investors in $\{500, 1000, 1500\}$ and the minimum return $r_{min}$ in the interval $[1,5]$. We obtain plots that are similar to Figure~\ref{fig:quality}.
As we describe in Appendix~\ref{app:experiments}, we find that there is a sweet spot for the parameter choices $n$ and $r_{min}$. If $r_{min}$ is too small, the budget constraint is non-binding with or without privacy, so the variance increase over optimal is always 1. Meanwhile, if $r_{min}$ is too large, then the original quadratic program (Equation~\eqref{eq:portfolio}) is infeasible.

\subsection{Internet advertising}
\label{sec:advertising}

Many internet publishers hire companies called \emph{supply-side platforms (SSPs)} to manage their advertising inventory. A publisher using an SSP partitions its website's pages into $M$ groups, and informs the SSP of the number $n_j$ of impressions (i.e., visiting users) available in each group $j$. For example, an online newspaper might have a sports section, a fashion section, and so on. The SSP relays the list of inventory groups to $N$ potential advertisers, and each advertiser $i$ responds with the monetary amount $c_{ij} \ge 0$ they are willing to pay per impression from each group $j$, and also their budget $b(D)_i \ge 0$ for overall spending on the publisher's website, where $D$ represents advertisers' confidential business information, such as their financial health and strategic priorities. The SSP then allocates $x_{ij}$ impressions from each group $j$ to each advertiser $i$ so as to maximize the publisher's revenue while respecting both the impression supply constraints and advertiser budget constraints:
\begin{equation}\arraycolsep=1.4pt\def\arraystretch{1.5}
\begin{array}{lll}
\text{maximize} & \sum_{ij} c_{ij} x_{ij}\\
\text{such that} & \sum_{i = 1}^N x_{ij} \le n_j & \text{ for } j \in [M] \\
& \sum_{j = 1}^M c_{ij} x_{ij} \le b(D)_i & \text{ for } i \in [N]\\
& x_{ij} \ge 0.
\end{array}
\label{eq:advertising}
\end{equation}
This linear program is similar to the transportation problem from Figure~\ref{fig:transportation}.

Existing algorithms for private optimization are not guaranteed to output a solution that satisfies all the problem constraints, so we explore how often those algorithms violate the constraints when applied to the advertising problem in Equation~\eqref{eq:advertising}. The algorithm most closely related to ours is by \citet{Hsu14:Privately}, but our settings do not quite match: they require that the optimal solution have constant norm across all possible private database. If this is not the case (and it is not for our advertising problem), \citet{Hsu14:Privately} recommend normalizing the problem parameters by the norm of the optimal non-private solution (which itself is a sensitive value). However, this will necessarily impact the problem's sensitivity parameter $\Delta$, and \citet{Hsu14:Privately} do not provide guidance on how to quantify this impact, though knowing this sensitivity is crucial for running the algorithm.

Therefore, we compare our algorithm with an alternative baseline.
We run experiments that use two algorithms to transform each advertiser's budget $b(D)_i$ in Equation~\eqref{eq:advertising} to a private budget $\bar{b}(D)_i$. Both algorithms set $\bar{b}(D)_i = \max\left\{b(D)_i - s + \eta, 0\right\}$, where $s = \frac{\Delta}{\epsilon} \ln \left(\frac{N (e^{\epsilon} - 1)}{\delta}  + 1\right)$ for privacy parameters $\epsilon, \delta$ and sensitivity parameter $\Delta$, and $\eta$ is a random variable. The first algorithm follows our method described in Section \ref{sec:multi_dim} and draws $\eta$ from the truncated Laplace distribution with support $[-s,s]$ and scale $\frac{\Delta}{\epsilon}$. The baseline algorithm instead draws $\eta$ from the Laplace distribution with scale $\frac{\Delta}{\epsilon}$, and thus is $(\epsilon,0)$-differentially private. Both algorithms use noise distributions with roughly the same shape, but only our algorithm is guaranteed to satisfy the original constraints.

Our experiments consist of simulations with parameters chosen to resemble real data from an actual SSP. The publisher has $M = 200$ inventory groups, and there are $N = 10$ advertisers who wish to purchase inventory on the publisher's website. The amount $c_{ij}$ each advertiser $i$ is willing to pay per impression from each group $j$ is \$0 with probability 0.2, and drawn uniformly from $[\$0, \$1]$ with probability 0.8. The number of impressions $n_j$ per group $j$ is $10^7$, and each advertiser's budget $b(D)_i$ is drawn uniformly from $[\$10^7 - \Delta/2, \$10^7 + \Delta/2]$, where $\Delta = \$10^2$ is also the sensitivity of the budgets with respect to the private database $D$. The results for various values of the privacy parameter $\epsilon$ (with the privacy parameter $\delta$ fixed at $10^{-4}$) are shown in Figure~\ref{fig:advertising}, where every data point on the plot is the average of $400$ simulations.
\begin{figure}
     \centering
	\includegraphics[width=.5\linewidth]{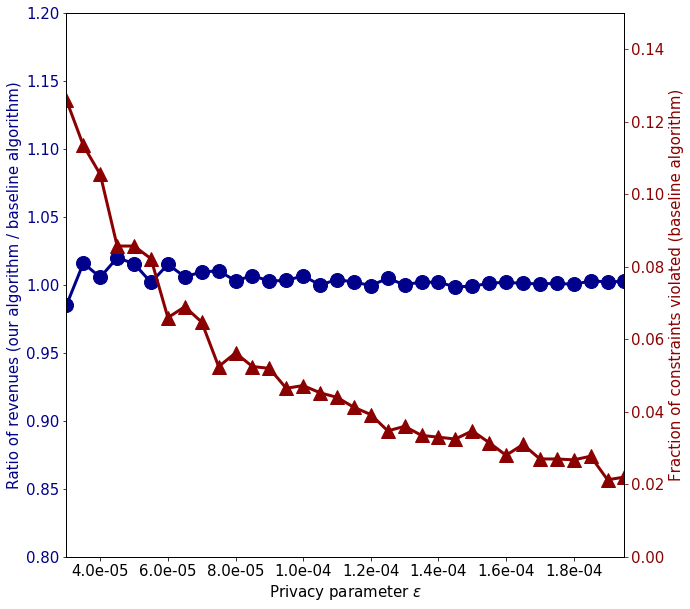}\centering
	\caption{Quality in the advertising application. Ratio of the revenue of our algorithm's solution and that of the baseline algorithm's solution (circle markers, left vertical axis), and fraction of constraints in the original optimization problem (Equation~\eqref{eq:advertising}) violated by the baseline algorithm (triangle markers, right vertical axis). See Section~\ref{sec:advertising} for details.}\label{fig:advertising}
\end{figure}

Figure~\ref{fig:advertising} shows that for small values of $\epsilon$, using the baseline algorithm yields a large number of violated constraints, while using our algorithm violates no constraints and incurs virtually no loss in revenue.

\section{Conclusions}
We presented a differentially private method for solving linearly-constrained optimization problems, where the right-hand side of the constraints $\A\vec{x} \le \vec{b}$ depends on private data, and where the constraints must always be satisfied. We showed that our algorithm is nearly optimal: its loss is tight up to a factor of $O(\log m)$ among all DP algorithms. Empirically, we used real and synthetic datasets to show that our algorithm returns nearly optimal solutions in realistic settings. A natural direction for future research would be to allow the matrix $\A$ to also depend on private data.

\subsection*{Acknowledgments}
This work was supported in part by a fellowship from Carnegie Mellon University’s Center for Machine Learning and Health and an IBM PhD Fellowship to E.V.

\bibliography{VitercikLibrary}
\bibliographystyle{plainnat}

\appendix

\section{Additional details about related research}\label{app:related}
In \citeauthor{Hsu14:Privately}'s paper on private linear programming~\citep{Hsu14:Privately}, the authors study two different categories of linear program (LPs) when the constraint vector $\vec{b}(D)$ depends on private data:
\begin{enumerate}
    \item \emph{High-sensitivity LPs}: For any two neighboring databases $D$ and $D'$, there is exactly one component $i \in [m]$ where $\vec{b}(D)_i \not= \vec{b}(D')_i$, and for every other component $j \not= i$, $\vec{b}(D)_j = \vec{b}(D')_j$.
    \item \emph{Low-sensitivity LPs}: For any two neighboring databases $D$ and $D'$ of size $N$, $\norm{\vec{b}(D) - \vec{b}(D')}_{\infty} \leq \frac{1}{N}$.
\end{enumerate}

They prove that in general, high-sensitivity LPs cannot be solved privately. Specifically, for any database $D \in \{0,1\}^n$ of size $n$, they define the following high-sensitivity LP:
\[\begin{array}{lll}
\text{find} & \vec{x}\\
\text{such that} & x_i = D_i & \text{for all } i \in [n].
\end{array}\]
They prove that for any $(\epsilon, \delta)$-differentially private mechanism with output $\vec{x}(D)$, there will be at least one component $i \in [n]$ such that $\left|\vec{x}(D)_i - D_i\right| \geq \frac{1}{2}$ (otherwise, the mechanism would be able to reconstruct $D$ exactly). This fact does not contradict our upper bound from Theorem~\ref{thm:main} since this worst-case problem does not satisfy Assumption~\ref{assumption:nonempty}.

For low-sensitivity LPs, they show that a private version of the multiplicative weights algorithm returns a solution that is close to the LP's optimal solution. Their solution is allowed to violate the LP's constraints, so their algorithm does not apply in our setting.

\section{Omitted proofs from Section~\ref{sec:multi_dim} about multi-dimensional optimization}\label{app:multi_dim}
\private*

\begin{proof}
	Let $D$ and $D'$ be two neighboring databases. We write the density function of $\vec{b}(D) - s + \vec{\eta}$ as $f_D(\vec{u}) \propto \prod_{i = 1}^m\exp\left(-\frac{\epsilon|u_i + s - b(D)_i|}{\Delta}\right)$ when $\vec{u} \in [\vec{b}(D) - 2s, \vec{b}(D)]$ and $f_D(\vec{u}) = 0$ when $\vec{u} \not\in [\vec{b}(D) - 2s, \vec{b}(D)]$. This proof relies on the following two claims. The first claim shows that in the intersection of the supports $[\vec{b}(D) - 2s, \vec{b}(D)] \cap [\vec{b}(D') - 2s, \vec{b}(D')],$ the density functions $f_D$ and $f_{D'}$ are close.
	
	\begin{claim}\label{claim:eps}
		Let $\vec{u}$ be a vector in the intersection of the supports $[\vec{b}(D) - 2s, \vec{b}(D)] \cap [\vec{b}(D') - 2s, \vec{b}(D')]$. Then $f_{D}(\vec{u}) \leq e^{\epsilon}f_{D'}(\vec{u})$.
		\end{claim}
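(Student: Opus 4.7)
The plan is to prove Claim~\ref{claim:eps} via a direct, coordinate-wise calculation that reduces to the reverse triangle inequality combined with the definition of sensitivity. The key observation is that the density $f_D$ factorizes across coordinates, and the normalization constant of a one-dimensional truncated Laplace distribution with support $[-s,s]$ and scale $\Delta/\epsilon$ does not depend on $D$. Consequently, when we form the ratio $f_D(\vec{u})/f_{D'}(\vec{u})$ at a point $\vec{u}$ lying in the intersection of the two supports, the normalization constants cancel, and we are left with an explicit product of exponentials.

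More concretely, I would first write
\[
\frac{f_D(\vec{u})}{f_{D'}(\vec{u})} \;=\; \prod_{i=1}^m \exp\!\left(\frac{\epsilon}{\Delta}\Bigl(|u_i + s - b(D')_i| - |u_i + s - b(D)_i|\Bigr)\right),
\]
which is well-defined and positive because $\vec{u}$ lies in both supports, so every factor in the denominator is nonzero. Then I would invoke the reverse triangle inequality $\bigl||a|-|b|\bigr| \le |a-b|$ with $a = u_i + s - b(D')_i$ and $b = u_i + s - b(D)_i$, yielding
\[
|u_i + s - b(D')_i| - |u_i + s - b(D)_i| \;\le\; |b(D)_i - b(D')_i|.
\]

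Summing this bound over $i \in [m]$ inside the exponent and applying the sensitivity definition $\|\vec{b}(D) - \vec{b}(D')\|_1 \le \Delta$ yields
\[
\frac{f_D(\vec{u})}{f_{D'}(\vec{u})} \;\le\; \exp\!\left(\frac{\epsilon}{\Delta}\,\|\vec{b}(D) - \vec{b}(D')\|_1\right) \;\le\; e^{\epsilon},
\]
which is exactly the claim. I do not expect a genuine obstacle here: the proof is essentially a one-line calculation once the normalization is observed to cancel and the reverse triangle inequality is invoked. The only subtlety is ensuring the ratio is well-defined, which is precisely why the claim restricts $\vec{u}$ to the intersection of supports; outside that intersection one of the densities vanishes and the pointwise ratio is not meaningful (this case will be handled separately in the main privacy proof using the $\delta$ budget).
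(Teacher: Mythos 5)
Your proposal is correct and follows essentially the same route as the paper's proof: factorize the ratio of densities coordinate-wise (the normalization constants cancel since the truncated Laplace noise has the same support for all databases), apply the reverse triangle inequality in each coordinate, and sum to invoke the $\ell_1$ sensitivity bound $\|\vec{b}(D)-\vec{b}(D')\|_1 \le \Delta$. No gaps.
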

		
			\begin{proof}[Proof of Claim~\ref{claim:eps}]
 Since $\vec{u}$ is a vector in the intersection of the support $[\vec{b}(D) - 2s, \vec{b}(D)] \cap [\vec{b}(D') - 2s, \vec{b}(D')]$, \begin{align*}\frac{f_D(\vec{u})}{f_{D'}(\vec{u})} &= \prod_{i = 1}^m\frac{\exp\left(-\epsilon|u_i + s - b(D)_i|/\Delta\right)}{\exp\left(-\epsilon|u_i + s - b(D')_i|/\Delta\right)}\\
		&= \prod_{i = 1}^m\exp\left(\frac{\epsilon\left(|u_i + s - b(D')_i| - |u_i + s - b(D)_i|\right)}{\Delta}\right)\\
		&\leq \prod_{i = 1}^m\exp\left(\frac{\epsilon|b(D)_i - b(D')_i|}{\Delta}\right)\\
		&=\exp\left(\frac{\epsilon\sum_{i = 1}^m|b(D)_i - b(D')_i|}{\Delta}\right)\\
			&\leq\exp\left(\frac{\epsilon\Delta}{\Delta}\right)\\
			&= e^{\epsilon},
		\end{align*}
	as claimed.
\end{proof}
		The second claim shows that the total density of $\vec{b}(D) - s + \vec{\eta}$ on vectors not contained in the support of $\vec{b}(D') - s + \vec{\eta}$ is at most $\delta$.

	\begin{claim}\label{claim:delta}
Let $V = [\vec{b}(D) - 2s, \vec{b}(D)] \setminus [\vec{b}(D') - 2s, \vec{b}(D')]$ be the set of vectors in the support of $\vec{b}(D) - s + \vec{\eta}$  but not in the support of $\vec{b}(D') - s + \vec{\eta}$. Then $\Pr[\vec{b}(D) - s + \vec{\eta} \in V] \leq \delta$.
		\end{claim}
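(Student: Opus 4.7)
The plan is to reduce the event $\{\vec{b}(D) - s + \vec{\eta} \in V\}$ to a union of one-dimensional tail events and then bound each one using the truncated Laplace density. Write $\vec{u} = \vec{b}(D) - s + \vec{\eta}$ with $\vec{\eta} \in [-s,s]^m$, so $\vec{u}$ automatically lies in the support $[\vec{b}(D) - 2s, \vec{b}(D)]$. The condition $\vec{u} \notin [\vec{b}(D') - 2s, \vec{b}(D')]$ is equivalent to the existence of some coordinate $i$ with $u_i \notin [b(D')_i - 2s, b(D')_i]$. Writing $\Delta_i = b(D)_i - b(D')_i$, this coordinate-wise condition becomes $\eta_i > s - \Delta_i$ or $\eta_i < -s - \Delta_i$.

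The first step is then a union bound over $i \in [m]$:
\[\Pr[\vec{u} \in V] \leq \sum_{i=1}^m \Pr[\eta_i \notin [-s - \Delta_i,\, s - \Delta_i]\cap[-s,s]].\]
Since the truncated Laplace is symmetric, one of the two tails is automatically empty (depending on the sign of $\Delta_i$), so each summand equals $\Pr[\eta_i \in [s - |\Delta_i|,\, s]]$.

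The second step is a direct computation using the truncated Laplace density $p(x) = \frac{\epsilon}{2\Delta(1 - e^{-\epsilon s/\Delta})}\,e^{-\epsilon|x|/\Delta}$ on $[-s,s]$. With $K := e^{\epsilon s/\Delta} = \frac{m(e^\epsilon - 1)}{\delta} + 1$ (by the definition of $s$), integration gives
\[\Pr[\eta_i \in [s - |\Delta_i|,\, s]] \;=\; \frac{1}{2(K-1)}\bigl(e^{\epsilon|\Delta_i|/\Delta} - 1\bigr).\]
Since $|\Delta_i| \leq \Delta$ (an immediate consequence of the $\ell_1$-sensitivity bound), this is at most $\frac{e^\epsilon - 1}{2(K-1)}$.

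The third step is to plug in $K - 1 = m(e^\epsilon - 1)/\delta$ and sum: the union bound yields
\[\Pr[\vec{u} \in V] \;\leq\; m \cdot \frac{e^\epsilon - 1}{2(K-1)} \;=\; m \cdot \frac{e^\epsilon - 1}{2 m (e^\epsilon - 1)/\delta} \;=\; \frac{\delta}{2} \;\leq\; \delta,\]
as required. I do not expect a serious obstacle here; the calibration of $s$ was designed so that the per-coordinate tail probability is exactly $\delta/(2m)$, and the main thing to be careful about is keeping track of signs when translating the set containment $\vec{u} \notin [\vec{b}(D') - 2s, \vec{b}(D')]$ into a condition on $\eta_i$, and verifying that only one of the two tails can actually occur for a given sign of $\Delta_i$ (so that no factor-of-$2$ is lost in translation).
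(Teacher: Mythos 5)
Your proof is correct and follows essentially the same route as the paper's: reduce membership in $V$ to per-coordinate tail events for $\eta_i$, union-bound over the $m$ coordinates, and integrate the truncated Laplace density, with the calibration of $s$ making the sum come out to at most $\delta$. The only difference is that by tracking the sign of $b(D)_i - b(D')_i$ you observe that only one tail per coordinate can occur, giving $\delta/2$ where the paper counts both tails and gets exactly $\delta$; this is a minor sharpening, not a different argument.
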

	
	\begin{proof}[Proof of Claim~\ref{claim:delta}]
Suppose $\vec{b}(D) - s + \vec{\eta} \in V$. Then for some $i \in [m]$, either $b(D)_i - s + \eta_i < b(D')_i - 2s$ or $b(D)_i - s + \eta_i > b(D')_i$. This implies that either $\eta_i < -s + \Delta$ or $\eta_i > s - \Delta$. The density function of the truncated Laplace distribution with support $[-s,s]$ and scale $\frac{\Delta}{\epsilon}$ is  \[f(\eta) = \begin{cases}\frac{1}{Z}\exp\left(-\frac{|\eta|\epsilon}{\Delta}\right) &\text{if } \eta \in [-s, s]\\
	0 &\text{otherwise,}
	\end{cases}\] where $Z = \frac{2 \Delta \left(1 - e^{-\epsilon s/\Delta}\right)}{\epsilon}$ is a normalizing constant. Therefore, the probability that for some $i \in [m]$, either $\eta_i < -s + \Delta$ or $\eta_i > s - \Delta$ is \begin{align*}
		m\left(\int_{-s}^{-s + \Delta} f(\eta) \, d\eta + \int_{s - \Delta}^{s} f(\eta) \, d\eta\right)
		&= \frac{m}{Z}\left(\int_{-s}^{-s + \Delta} \exp\left(-\frac{|\eta|\epsilon}{\Delta}\right)\, d\eta + \int_{s - \Delta}^{s}\exp\left(-\frac{|\eta|\epsilon}{\Delta}\right)\, d\eta\right)\\
			&= \frac{2m\Delta (e^{\epsilon} - 1)e^{-s\epsilon/\Delta}}{Z\epsilon}\\
			&= \frac{m\left(e^{\epsilon} - 1\right)e^{-s \epsilon/\Delta}}{1 - e^{-\epsilon s/\Delta}}\\
			&=\frac{m\left(e^{\epsilon} - 1\right)}{e^{s \epsilon/\Delta} - 1}\\
			&=\delta,
		\end{align*} where the final equality follows from the fact that $s = \frac{\Delta}{\epsilon} \ln \left(\frac{m\left(e^{\epsilon} - 1\right)}{\delta} + 1\right).$ In turn, this implies that $\Pr[\vec{b}(D) - s + \vec{\eta} \in V] \leq \delta$.
	\end{proof}

These two claims imply that the mapping $D \mapsto \vec{b}(D) - s + \vec{\eta}$ preserves $(\epsilon, \delta)$-differential privacy. To see why, let $W \subseteq  [\vec{b}(D) - 2s, \vec{b}(D)]$ be an arbitrary set of vectors in the support of $\vec{b}(D) - s + \vec{\eta}$. Let $W_0 = W \cap [\vec{b}(D') - 2s, \vec{b}(D')]$ be the set of vectors in $W$ that are also in the support of $\vec{b}(D') - s + \vec{\eta}$ and let $W_1 = W \setminus [\vec{b}(D') - 2s, \vec{b}(D')]$ be the remaining set of vectors in $W$. As in Claim~\ref{claim:delta}, let $V = [\vec{b}(D) - 2s, \vec{b}(D)] \setminus [\vec{b}(D') - 2s, \vec{b}(D')]$ be the set of vectors in the support of $\vec{b}(D) - s + \vec{\eta}$  but not in the support of $\vec{b}(D') - s + \vec{\eta}$. Clearly, $W_1 \subseteq V$. Therefore, \begin{align*}\Pr[\vec{b}(D) - s + \vec{\eta} \in W] 	&=\Pr[\vec{b}(D) - s + \vec{\eta} \in W_0] + \Pr[\vec{b}(D) - s + \vec{\eta} \in W_1] \\
&	\leq \Pr[\vec{b}(D) - s + \vec{\eta} \in W_0] + \Pr[\vec{b}(D) - s + \vec{\eta} \in V] \\
		&=\int_{W_0}f_D(\vec{u})  \, d \vec{u}+ \int_{V}f_D(\vec{u})  \, d \vec{u}\\
				&\leq\int_{W_0}e^{\epsilon}f_{D'}(\vec{u})  \, d \vec{u}+ \int_{V}f_D(\vec{u})  \, d \vec{u} &\text{(Claim~\ref{claim:eps})}\\
&					\leq\int_{W_0}e^{\epsilon}f_{D'}(\vec{u})  \, d \vec{u}+ \delta &\text{(Claim~\ref{claim:delta})}\\
					&\leq e^{\epsilon}\Pr[\vec{b}(D') - s + \vec{\eta} \in W]+ \delta ,
		\end{align*}
	so differential privacy is preserved.
	\end{proof}

\begin{lemma}\label{lem:disjoint}
    Suppose $D$ and $D'$ are two neighboring databases with disjoint feasible regions: $\left\{\vec{x} : \A \vec{x} \leq \vec{b}(D)\right\} \cap \left\{\vec{x} : \A \vec{x} \leq \vec{b}(D')\right\} = \emptyset$. There is no $(\epsilon, \delta)$-DP mechanism with $\delta < 1$ that satisfies the constraints with probability 1.
\end{lemma}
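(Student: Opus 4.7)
The plan is a direct contradiction argument using the definition of differential privacy applied to the feasible region of $D$ itself. Suppose for contradiction that some $(\epsilon,\delta)$-DP mechanism $\vec{\mu}$ with $\delta < 1$ always produces a feasible output. Let $V = \{\vec{x} : \A\vec{x} \leq \vec{b}(D)\}$ denote the feasible region associated with $D$, and let $V' = \{\vec{x} : \A\vec{x} \leq \vec{b}(D')\}$ denote that of $D'$.

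The first step is to observe the two probability statements that the ``probability 1'' constraint satisfaction imposes. Since $\vec{\mu}(D)$ satisfies $\A\vec{\mu}(D) \leq \vec{b}(D)$ with probability 1, we have $\Pr[\vec{\mu}(D) \in V] = 1$. Symmetrically, $\Pr[\vec{\mu}(D') \in V'] = 1$, and since $V \cap V' = \emptyset$ by hypothesis, this forces $\Pr[\vec{\mu}(D') \in V] = 0$.

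The second step is to invoke the $(\epsilon,\delta)$-DP condition on the set $V \subseteq \R^n$ for the neighboring pair $D \sim D'$. By definition,
\[
\Pr[\vec{\mu}(D) \in V] \leq e^{\epsilon} \Pr[\vec{\mu}(D') \in V] + \delta.
\]
Substituting the two values from the previous step yields $1 \leq e^{\epsilon} \cdot 0 + \delta = \delta$, contradicting the assumption $\delta < 1$. This completes the proof.

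There is no real obstacle here — the statement is essentially a one-line consequence of applying the DP inequality to a ``witnessing'' event that must have probability $1$ under $D$ and probability $0$ under $D'$. The only thing to be careful about is making sure $V$ is a legitimate measurable subset of $\R^n$ in the DP definition, which it is since $V$ is defined by finitely many linear inequalities and is therefore a closed (in particular, Borel) set.
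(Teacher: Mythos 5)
Your proof is correct and is essentially identical to the paper's: both apply the $(\epsilon,\delta)$-DP inequality to the set $V = \{\vec{x} : \A\vec{x} \leq \vec{b}(D)\}$, note that feasibility forces $\Pr[\vec{\mu}(D) \in V] = 1$ and disjointness forces $\Pr[\vec{\mu}(D') \in V] = 0$, and derive the contradiction $1 \leq \delta$. Your added remark about measurability of $V$ is a harmless extra precaution not present in the paper.
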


\begin{proof}
For the sake of a contradiction, suppose $\mu : 2^{\cX} \to \R^n$ is an $(\epsilon, \delta)$-DP mechanism with $\delta < 1$  that satisfies the constraints with probability 1. Let $V = \left\{\vec{x} : \A \vec{x} \leq \vec{b}(D)\right\}$. Since $V \cap \left\{\vec{x} : \A \vec{x} \leq \vec{b}(D')\right\} = \emptyset$, it must be that $\Pr[\mu(D') \in V] = 0$. This means that $1 = \Pr[\mu(D) \in V] \leq e^{\epsilon} \Pr[\mu(D') \in V] + \delta = \delta$, which is a contradiction. Therefore, the lemma statement holds.
\end{proof}

\begin{lemma}\label{lem:feasible}
With probability 1, the optimization problem in Equation~\eqref{eq:private_LP} is feasible.
\end{lemma}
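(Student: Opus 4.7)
The plan is to prove feasibility deterministically rather than in a probabilistic sense, by finding a point that lies in the feasible region of Equation~\eqref{eq:private_LP} regardless of the realization of the noise vector $\vec{\eta}$. The natural candidate is any point in the intersection $S^*$ from Assumption~\ref{assumption:nonempty}, which is non-empty by hypothesis.

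First, I would observe that the construction of $\bar{\vec{b}}(D)$ is explicitly designed so that $\bar{b}(D)_i = \max\{b(D)_i - s + \eta_i, b_i^*\} \geq b_i^*$ for every $i \in [m]$ and every realization of $\eta_i$. Thus, coordinate-wise, $\bar{\vec{b}}(D) \geq \vec{b}^* := (b_1^*, \dots, b_m^*)$ with probability $1$ (in fact, surely). Next, I would invoke Lemma~\ref{lem:intersection}, which identifies $S^* = \{\vec{x} : \A \vec{x} \leq \vec{b}^*\}$. Together with Assumption~\ref{assumption:nonempty}, this yields an $\vec{x}^* \in \R^n$ with $\A \vec{x}^* \leq \vec{b}^*$.

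Combining these two facts, for any realization of $\vec{\eta}$, the point $\vec{x}^*$ satisfies $\A \vec{x}^* \leq \vec{b}^* \leq \bar{\vec{b}}(D)$ coordinate-wise, so $\vec{x}^*$ is feasible for Equation~\eqref{eq:private_LP}. Hence the feasible region is non-empty with probability $1$, and the optimization problem is feasible with probability $1$.

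There is essentially no obstacle once Lemma~\ref{lem:intersection} is in hand: the truncation at $b_i^*$ in the definition of $\bar{b}(D)_i$ was engineered precisely so that the intersection $S^*$ provides a guaranteed feasible point. The only subtlety is remembering that feasibility does not require the true constraint vector $\vec{b}(D)$ itself to dominate $\bar{\vec{b}}(D)$, but rather that some common point survives under $\bar{\vec{b}}(D)$; the lower bound $\bar{\vec{b}}(D) \geq \vec{b}^*$ is exactly what supplies such a point via Assumption~\ref{assumption:nonempty}.
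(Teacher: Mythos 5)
Your proof is correct and follows essentially the same route as the paper's: both arguments note that the truncation forces $\bar{\vec{b}}(D) \geq \vec{b}^*$ coordinate-wise for every realization of the noise, then invoke Lemma~\ref{lem:intersection} and Assumption~\ref{assumption:nonempty} to exhibit a point of $S^*$ that remains feasible. No gaps.
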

\begin{proof}
By definition, the constraint vector $\bar{\vec{b}}$ is component-wise greater than the vector $\vec{b}^* = \left(b_1^*, \dots, b_m^*\right)$, where $b_i^* = \inf_{D \subseteq \cX} b(D)_i.$ Therefore, $\left\{\vec{x} : \A \vec{x} \leq \bar{\vec{b}}(D)\right\} \supseteq \left\{\vec{x} : \A \vec{x} \leq \vec{b}^*\right\}$. By Lemma \ref{lem:intersection}, we know that $\left\{\vec{x} : \A \vec{x} \leq \vec{b}^*\right\} = \bigcap_{D \subseteq \cX} \left\{\vec{x} : \A \vec{x} \leq \vec{b}(D) \right\}$ and by Assumption~\ref{assumption:nonempty}, we know that $\bigcap_{D \subseteq \cX} \left\{\vec{x} : \A \vec{x} \leq \vec{b}(D) \right\}$ is nonempty. Therefore, the feasible set of the linear program in Equation~\eqref{eq:private_LP}, $\left\{\vec{x} : \A \vec{x} \leq \bar{\vec{b}}(D)\right\}$, is nonempty.
\end{proof}

We now prove Lemma~\ref{lem:intersection}, which we used in the proof of Lemma \ref{lem:feasible}. Lemma~\ref{lem:intersection} guarantees that the (nonempty) intersection of the feasible regions across all databases is equal to the set of all $\vec{x}$ such that $\A \vec{x} \leq \vec{b}^*.$

\begin{lemma}\label{lem:intersection}
    The set $\bigcap_{D \subseteq \cX} \left\{\vec{x} : \A \vec{x} \leq \vec{b}(D) \right\}$ is equal to the set $\left\{\vec{x} : \A \vec{x} \leq \vec{b}^*\right\}$.
    \end{lemma}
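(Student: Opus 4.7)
The plan is to prove set equality by double inclusion, exploiting the fact that the constraint $\A \vec{x} \leq \vec{b}$ is really a finite collection of scalar inequalities, one per row, which interact with the coordinate-wise infimum $\vec{b}^*$ in a completely decoupled way.

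For the forward inclusion, I would take an arbitrary $\vec{x}$ lying in $\bigcap_{D \subseteq \cX}\{\vec{x} : \A\vec{x} \leq \vec{b}(D)\}$ and show $\A \vec{x} \leq \vec{b}^*$. Fix any row index $i \in [m]$. By assumption, $(\A\vec{x})_i \leq b(D)_i$ holds for every database $D \subseteq \cX$. Since $(\A\vec{x})_i$ does not depend on $D$, it is a lower bound on the set $\{b(D)_i : D \subseteq \cX\}$, so it is at most the infimum $b_i^* = \inf_D b(D)_i$. This gives $(\A\vec{x})_i \leq b_i^*$ for every $i$, hence $\A \vec{x} \leq \vec{b}^*$.

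For the reverse inclusion, assume $\A \vec{x} \leq \vec{b}^*$ and fix an arbitrary database $D \subseteq \cX$. For each row $i$, the definition of infimum gives $b_i^* \leq b(D)_i$, and combining this with $(\A\vec{x})_i \leq b_i^*$ yields $(\A\vec{x})_i \leq b(D)_i$. This holds for every $i$ and every $D$, so $\vec{x}$ lies in the intersection.

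There is no real obstacle here; the only subtlety to be careful about is that the $m$ coordinate-wise infima $b_i^*$ may not be attained simultaneously by any single database $D$, so $\vec{b}^*$ itself need not equal $\vec{b}(D)$ for any $D$. The proof above sidesteps this by never needing the infimum to be attained: both directions use only the defining inequality $b_i^* \leq b(D)_i$ together with the fact that $(\A\vec{x})_i$ is independent of $D$. The argument is elementary enough that I would present it directly rather than invoking any auxiliary lemma.
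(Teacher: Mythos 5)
Your proof is correct and follows essentially the same double-inclusion argument as the paper: the forward direction uses that $(\A\vec{x})_i$ is a $D$-independent lower bound on $\{b(D)_i\}$ and hence at most the infimum, and the reverse direction uses $b_i^* \leq b(D)_i$. Your remark that the infima need not be simultaneously attained is a nice clarification, but the argument is otherwise the paper's own.
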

    \begin{proof}
        Suppose that $\vec{x} \in \bigcap_{D \subseteq \cX} \left\{\vec{x} : \A \vec{x} \leq \vec{b}(D) \right\}$. We claim that $\A \vec{x} \leq \vec{b}^*.$ To see why, let $\vec{a}_i$ be the $i^{th}$ row of the matrix $\A$. We know that for all datasets $D \subseteq \cX$, $\vec{a}_i \cdot \vec{x} \leq b(D)_i$. By definition of the infimum, this means that $\vec{a}_i \cdot \vec{x} \leq \inf_{D \subseteq \cX}b(D)_i = b_i^*$. Therefore, $\A \vec{x} \leq \vec{b}^*.$
        
        Next, suppose $\A \vec{x} \leq \vec{b}^*.$ Then $\A \vec{x} \leq \vec{b}(D)$ for every database $D$, which means that $\vec{x} \in \bigcap_{D \subseteq \cX} \left\{\vec{x} : \A \vec{x} \leq \vec{b}(D) \right\}$. We conclude that $\bigcap_{D \subseteq \cX} \left\{\vec{x} : \A \vec{x} \leq \vec{b}(D) \right\} = \left\{\vec{x} : \A \vec{x} \leq \vec{b}^*\right\}$.
    \end{proof}

 	\lb*
 	 \begin{proof}
 		For ease of notation, let $t = \frac{1}{\epsilon} \ln \left(\frac{e^{\epsilon} - 1}{2\delta} + 1\right).$ Notice that $\delta \leq \frac{1}{2}$ implies $t \geq 1$. 
 	For each vector $\vec{d} \in \Z^m$, let $D_{\vec{d}}$ be a database where for any $\vec{d}, \vec{d}' \in \Z^m$, if $\norm{\vec{d} - \vec{d}'}_1 \leq 1$, then $D_{\vec{d}}$ and $D_{\vec{d}'}$ are neighboring. Let $\vec{b}\left(D_{\vec{d}}\right) = \Delta \vec{d}$ and let $a_1, \dots, a_m > 0$ be the diagonal entries of $\A$. Since $\A\vec{\mu}\left(D_{\vec{d}}\right) \leq \vec{b}\left(D_{\vec{d}}\right)$ with probability 1, $\vec{\mu}\left(D_{\vec{d}}\right)$ must be coordinate-wise smaller than $\Delta \left(\frac{d_1}{a_1}, \dots, \frac{d_m}{a_m}\right)$.
 
 We begin by partitioning the support of $\vec{\mu}\left(D_{\vec{d}}\right)$ so that we can analyze $\E\left[g\left(\vec{\mu}\left(D_{\vec{d}}\right)\right)\right]$ using the law of total expectation. We organize this partition using axis-aligned rectangles.
Specifically, for each index $i \in [m]$, let $S_{i}^0$ be the set of vectors $\vec{x} \in \R^m$ whose $i^{th}$ components are smaller than $\frac{\Delta}{a_i}\left(d_i - \lfloor t \rfloor\right)$: \[S_{i}^0 = \left\{\vec{x} \in \R^m : x_i \leq \frac{\Delta}{a_i}\left(d_i - \lfloor t \rfloor\right)\right\}.\] Similarly, let \[S_{i}^1 = \left\{\vec{x} \in \R^m : \frac{\Delta}{a_i}\left(d_i - \lfloor t \rfloor\right) < x_i \leq \frac{\Delta d_i}{a_i}\right\}.\] For any vector $\vec{I} \in \{0,1\}^m$, let $S_{\vec{I}} = \cap_{i = 1}^m S_i^{I_i}$. The sets $S_{\vec{I}}$ partition the support of $\vec{\mu}\left(D_{\vec{d}}\right)$ into rectangles. Therefore, by the law of total expectation, \begin{equation}\E\left[g\left(\vec{\mu}\left(D_{\vec{d}}\right)\right)\right] = \sum_{{\vec{I}} \in \{0,1\}^m}\E\left[g\left(\vec{\mu}\left(D_{\vec{d}}\right)\right) \mid \vec{\mu}\left(D_{\vec{d}}\right) \in S_{{\vec{I}}}\right]\mathbb{P}\left[\vec{\mu}\left(D_{\vec{d}}\right) \in S_{{\vec{I}}}\right].\label{eq:law_app}\end{equation}
 
 Conditioning the vector $\vec{\mu}\left(D_{\vec{d}}\right)$ to lie within a rectangle $S_{{\vec{I}}}$ makes it much easier to analyze the expected value of $g\left(\vec{\mu}\left(D_{\vec{d}}\right)\right)$.
 Suppose that $\vec{\mu}\left(D_{\vec{d}}\right) \in S_{{\vec{I}}}$ for some $\vec{I} \in \{0,1\}^m$. If $I_i = 0$, then we know that $\mu\left(D_{\vec{d}}\right)_i \leq \frac{\Delta}{a_i}\left(d_i - \lfloor t \rfloor\right)$. Meanwhile, if $I_i = 1$, then $\mu\left(D_{\vec{d}}\right)_i \leq \frac{\Delta d_i}{a_i}$ since $\A \vec{\mu}\left(D_{\vec{d}}\right) \leq \vec{b}\left(D_{\vec{d}}\right)$ with probability 1. Since $g(\vec{x}) = \langle \vec{1}, \vec{x} \rangle$, we have that for each $\vec{I} \in \{0,1\}^m$,
\[E\left[g\left(\vec{\mu}\left(D_{\vec{d}}\right)\right) \mid \vec{\mu}\left(D_{\vec{d}}\right) \in S_{{\vec{I}}}\right]\leq\sum_{i = 1}^m \frac{\Delta\left(d_i - \lfloor t \rfloor\right)}{a_i}\textbf{1}_{\{I_i = 0\}} + \frac{\Delta d_i}{a_i}\textbf{1}_{\{I_i = 1\}} = \sum_{i = 1}^m \frac{\Delta d_i}{a_i}-  \frac{\Delta\lfloor t \rfloor}{a_i}  \textbf{1}_{\{I_i = 0\}}.\]
 Combining this inequality with Equation~\eqref{eq:law_app} and rearranging terms,
 we have that \begin{align*} \E\left[g\left(\vec{\mu}\left(D_{\vec{d}}\right)\right)\right] &\leq  \Delta\sum_{i = 1}^m \frac{d_i}{a_i}-  \sum_{{\vec{I}} \in \{0,1\}^m}\sum_{i = 1}^m \frac{\Delta\lfloor t \rfloor}{a_i}  \textbf{1}_{\{I_i = 0\}}\Pr\left[\vec{\mu}\left(D_{\vec{d}}\right) \in S_{{\vec{I}}}\right]\\
&= \Delta\sum_{i = 1}^m \frac{d_i}{a_i} -  \Delta\lfloor t \rfloor \sum_{i = 1}^m\frac{1}{a_i}\sum_{{\vec{I}} \in \{0,1\}^m} \textbf{1}_{\{I_i = 0\}} \Pr\left[\vec{\mu}\left(D_{\vec{d}}\right) \in S_{{\vec{I}}}\right].\end{align*} For any $i \in [m]$, $\sum_{{\vec{I}} \in \{0,1\}^m} \textbf{1}_{\{I_i = 0\}} \Pr\left[\vec{\mu}\left(D_{\vec{d}}\right) \in S_{{\vec{I}}}\right] = \Pr\left[\vec{\mu}\left(D_{\vec{d}}\right) \in S_i^0\right].$ Therefore,
\begin{equation}\E\left[g\left(\vec{\mu}\left(D_{\vec{d}}\right)\right)\right]\leq \Delta\sum_{i = 1}^m \frac{d_i}{a_i} - \Delta \lfloor t \rfloor \sum_{i = 1}^m\frac{1}{a_i}\Pr\left[\vec{\mu}\left(D_{\vec{d}}\right) \in S_i^0\right].\label{eq:bound_exp_app}\end{equation}

We now prove that for every index $i \in [m]$, $\Pr\left[\vec{\mu}\left(D_{\vec{d}}\right) \in S_i^0\right] > \frac{1}{2}$. This proof relies on the following claim.
 	
 	\begin{restatable}{claim}{induction}\label{claim:md_induction}
 		For any index $i \in [m]$, vector $\bar{\vec{d}} \in \Z^m$, and integer $j \geq 1$, let $S_{\bar{\vec{d}},i, j}$ be the set of all vectors $\vec{x} \in \R^m$ whose $i^{th}$ component is in the interval $\left(\frac{\Delta(\bar{d}_i - j)}{a_i},  \frac{\Delta \bar{d}_i}{a_i}\right]$: \[S_{\bar{\vec{d}},i, j} = \left\{\vec{x} \in \R^m : \frac{\Delta(\bar{d}_i - j)}{a_i} < x_i \leq \frac{\Delta \bar{d}_i}{a_i}\right\}.\]
Then $\Pr\left[\vec{\mu}\left(D_{\bar{\vec{d}}}\right) \in S_{\bar{\vec{d}},i, j}\right] \leq \delta \sum_{\ell =0}^{j-1} e^{\epsilon \ell}$.
 	\end{restatable}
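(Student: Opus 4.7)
The plan is to prove the claim by induction on $j$, using the feasibility constraint $\A \vec{\mu}(D_{\bar{\vec{d}}}) \leq \vec{b}(D_{\bar{\vec{d}}})$ (which gives a hard upper bound on the $i$-th coordinate of $\vec{\mu}(D_{\bar{\vec{d}}})$) combined with $(\epsilon,\delta)$-differential privacy along the ``chain'' of neighboring databases $D_{\bar{\vec{d}}}, D_{\bar{\vec{d}}-\vec{e}_i}, D_{\bar{\vec{d}} - 2\vec{e}_i}, \dots$, where $\vec{e}_i$ is the $i$-th standard basis vector. The key observation is that for the neighbor $D_{\bar{\vec{d}}-\vec{e}_i}$, the feasibility requirement forces $\mu(D_{\bar{\vec{d}}-\vec{e}_i})_i \leq \frac{\Delta(\bar{d}_i - 1)}{a_i}$ with probability $1$, so any mass the mechanism places above this threshold on $D_{\bar{\vec{d}}}$ must be purely ``$\delta$-slack.''

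For the base case $j=1$, the set $S_{\bar{\vec{d}}, i, 1}$ contains only vectors whose $i$-th coordinate exceeds $\frac{\Delta(\bar{d}_i - 1)}{a_i}$, so $\Pr[\vec{\mu}(D_{\bar{\vec{d}}-\vec{e}_i}) \in S_{\bar{\vec{d}}, i, 1}] = 0$ by feasibility. Applying $(\epsilon,\delta)$-DP between the neighbors $D_{\bar{\vec{d}}}$ and $D_{\bar{\vec{d}}-\vec{e}_i}$ then yields $\Pr[\vec{\mu}(D_{\bar{\vec{d}}}) \in S_{\bar{\vec{d}}, i, 1}] \leq e^\epsilon \cdot 0 + \delta = \delta$, which matches the claimed bound (the single $\ell = 0$ term in the sum).

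For the inductive step, I will show that the bound for index $j-1$ centered at $\bar{\vec{d}} - \vec{e}_i$ implies the bound for index $j$ centered at $\bar{\vec{d}}$. First I note the identity between sets: the portion of $S_{\bar{\vec{d}}, i, j}$ that the neighboring output $\vec{\mu}(D_{\bar{\vec{d}} - \vec{e}_i})$ can possibly land in is exactly $S_{\bar{\vec{d}} - \vec{e}_i, i, j-1}$, since feasibility rules out the top slab $(\frac{\Delta(\bar{d}_i - 1)}{a_i}, \frac{\Delta \bar{d}_i}{a_i}]$. By the inductive hypothesis applied to $(\bar{\vec{d}} - \vec{e}_i, j-1)$, this probability is at most $\delta \sum_{\ell=0}^{j-2} e^{\epsilon \ell}$. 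Applying DP from $D_{\bar{\vec{d}} - \vec{e}_i}$ to $D_{\bar{\vec{d}}}$ then gives
\[
\Pr[\vec{\mu}(D_{\bar{\vec{d}}}) \in S_{\bar{\vec{d}}, i, j}] \leq e^\epsilon \cdot \delta \sum_{\ell=0}^{j-2} e^{\epsilon \ell} + \delta = \delta \sum_{\ell=1}^{j-1} e^{\epsilon \ell} + \delta = \delta \sum_{\ell=0}^{j-1} e^{\epsilon \ell},
\]
which is exactly the desired bound.

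The only subtle step is the set identity used in the induction: I must verify that $S_{\bar{\vec{d}} - \vec{e}_i, i, j-1}$ consists precisely of vectors whose $i$-th coordinate lies in $\left(\tfrac{\Delta(\bar{d}_i - j)}{a_i}, \tfrac{\Delta(\bar{d}_i - 1)}{a_i}\right]$, which matches $S_{\bar{\vec{d}}, i, j}$ intersected with the feasible region for $D_{\bar{\vec{d}} - \vec{e}_i}$. This is a direct unpacking of the definition of $S_{\bar{\vec{d}}', i, j'}$ with $\bar{\vec{d}}' = \bar{\vec{d}} - \vec{e}_i$ and $j' = j-1$, and is routine. Everything else is just mechanically applying the DP inequality and summing a geometric tail, so I expect no real obstacle beyond getting this indexing right.
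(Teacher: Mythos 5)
Your proof is correct and follows essentially the same route as the paper's: induction on $j$, with the base case coming from feasibility of $\vec{\mu}(D_{\bar{\vec{d}}-\vec{e}_i})$ forcing zero mass on the top slab, and the inductive step applying the DP inequality along the neighbor $D_{\bar{\vec{d}}-\vec{e}_i}$ together with the decomposition of $S_{\bar{\vec{d}},i,j}$ into $S_{\bar{\vec{d}}-\vec{e}_i,i,j-1}$ plus the zero-probability top slab. The only difference is cosmetic (you induct from $j-1$ to $j$ rather than $j$ to $j+1$), and your set-indexing identity checks out.
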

 	Notice that $S_{\vec{d},i, \lfloor t \rfloor} = S_i^1$, a fact that will allow us to prove that $\Pr\left[\vec{\mu}\left(D_{\vec{d}}\right) \in S_i^0\right] > \frac{1}{2}$.
   	\begin{proof}[Proof of Claim~\ref{claim:md_induction}]
 		We prove this claim by induction on $j$.
 		\paragraph{Base case $(j=1)$.} Fix an arbitrary index $i \in [m]$ and vector $\bar{\vec{d}} \in \Z^m$. Let $\vec{e}_i \in \{0,1\}^m$ be the standard basis vector with a 1 in the $i^{th}$ component and 0 in every other component. Since $\vec{b}\left(D_{\bar{\vec{d}} - \vec{e}_i}\right) = \Delta\left(\bar{\vec{d}} - \vec{e}_i\right)$, we know the probability that $\mu\left(D_{\bar{\vec{d}}- \vec{e}_i}\right)_i > \frac{\Delta\left(\bar{d}_i - 1\right)}{a_i}$  is zero. In other words, \begin{equation}\Pr\left[\vec{\mu}\left(D_{\bar{\vec{d}}- \vec{e}_i}\right) \in S_{\bar{\vec{d}},i,1}\right] = 0.\label{eq:md_base_case}\end{equation} Since $D_{\bar{\vec{d}}}$ and $D_{\bar{\vec{d}}- \vec{e}_i}$ are neighboring, this means that \[\Pr\left[\vec{\mu}\left(D_{\bar{\vec{d}}}\right) \in S_{\bar{\vec{d}},i,1}\right]\leq e^{\epsilon}\Pr\left[\vec{\mu}\left(D_{\bar{\vec{d}}- \vec{e}_i}\right) \in S_{\bar{\vec{d}},i,1}\right] + \delta = \delta.\]
 		
 		\paragraph{Inductive step.} Fix an arbitrary $j \geq 1$ and suppose that for all indices $i \in [m]$ and vectors $\bar{\vec{d}} \in \Z^m$, $\Pr\left[\vec{\mu}\left(D_{\bar{\vec{d}}}\right) \in S_{\bar{\vec{d}},i,j}\right] \leq \delta \sum_{\ell =0}^{j-1} e^{\epsilon \ell}.$ We want to prove that for all indices $i \in [m]$ and vectors $\bar{\vec{d}} \in \Z^m$, $\Pr\left[\vec{\mu}\left(D_{\bar{\vec{d}}}\right) \in S_{\bar{\vec{d}},i,j+1}\right] \leq \delta \sum_{\ell =0}^{j} e^{\epsilon \ell}.$ To this end, fix an arbitrary index $i \in [m]$ and vector $\bar{\vec{d}} \in \Z^m$. By the inductive hypothesis, we know that \begin{align}\Pr\left[\vec{\mu}\left(D_{\bar{\vec{d}} -\vec{e}_i }\right) \in S_{\bar{\vec{d}} -\vec{e}_i ,i,j}\right]\label{eq:md_ind_hyp_app}
 		\leq \delta \sum_{\ell =0}^{j-1} e^{\epsilon \ell}.\end{align} Note that \begin{align*}S_{\bar{\vec{d}},i, j + 1}&= \left\{\vec{x} : \frac{\Delta\left(\bar{d}_i - j - 1\right)}{a_i} < x_i \leq \frac{\Delta \bar{d}_i}{a_i}\right\}\\
 		&= \left\{\vec{x} : \frac{\Delta\left(\bar{d}_i - j - 1\right)}{a_i} < x_i \leq \frac{\Delta (\bar{d}_i - 1)}{a_i}\right\} \cup \left\{\vec{x} : \frac{\Delta\left(\bar{d}_i - 1\right) }{a_i}< x_i \leq \frac{\Delta \bar{d}_i}{a_i}\right\}\\
 		&= S_{\bar{\vec{d}} - \vec{e}_i,i,j} \cup S_{\bar{\vec{d}},i, 1}.\end{align*}
 		
 		We can now use this fact, the definition of differential privacy, and Equation~\eqref{eq:md_ind_hyp_app}, to prove the inductive hypothesis holds. By the definition of differential privacy, \[\Pr\left[\vec{\mu}\left(D_{\bar{\vec{d}}}\right) \in S_{\bar{\vec{d}},i, j+1}\right] \leq e^{\epsilon}\Pr\left[\vec{\mu}\left(D_{\bar{\vec{d}}-\vec{e}_i}\right) \in S_{\bar{\vec{d}},i,j+1}\right] + \delta.\] Since $S_{\bar{\vec{d}},i, j + 1} = S_{\bar{\vec{d}} - \vec{e}_i,i,j} \cup S_{\bar{\vec{d}},i, 1}$, we have that
 		\[\Pr\left[\vec{\mu}\left(D_{\bar{\vec{d}}}\right) \in S_{\bar{\vec{d}},i, j+1}\right]\leq e^{\epsilon}\left(\Pr\left[\vec{\mu}\left(D_{\bar{\vec{d}} - \vec{e}_i}\right) \in S_{\bar{\vec{d}} - \vec{e}_i,i,j}\right] + \Pr\left[\vec{\mu}\left(D_{\bar{\vec{d}} - \vec{e}_i}\right) \in S_{\bar{\vec{d}},i,1}\right]\right) + \delta.\]
 		By Equation~\eqref{eq:md_base_case}, we know that $\Pr\left[\vec{\mu}\left(D_{\bar{\vec{d}} - \vec{e}_i}\right) \in S_{\bar{\vec{d}},i,1}\right] = 0$, so \[\Pr\left[\vec{\mu}\left(D_{\bar{\vec{d}}}\right) \in S_{\bar{\vec{d}},i, j+1}\right] \leq e^{\epsilon}\Pr\left[\vec{\mu}\left(D_{\bar{\vec{d}} - \vec{e}_i}\right) \in S_{\bar{\vec{d}} - \vec{e}_i,i,j}\right] + \delta.\] Finally, by the inductive hypothesis (Equation~\eqref{eq:md_ind_hyp_app}), \[\Pr\left[\vec{\mu}\left(D_{\bar{\vec{d}}}\right) \in S_{\bar{\vec{d}},i, j+1}\right] \leq e^{\epsilon}\delta \sum_{\ell =0}^{j-1} e^{\epsilon \ell} + \delta = \delta \sum_{\ell =0}^{j} e^{\epsilon \ell},\]
 		so the inductive hypothesis holds.
 	\end{proof}
 
Since $S_{\vec{d},i, \lfloor t \rfloor} = S_i^1$,
our careful choice of the value $t$ allows us to prove that for every index $i$, $\Pr\left[\vec{\mu}\left(D_{\vec{d}}\right) \in S_i^0\right] > \frac{1}{2}$ (see Claim~\ref{claim:md_lb} in Appendix~\ref{app:multi_dim}).
 	This inequality, Equation~\eqref{eq:bound_exp_app}, and the fact that $t \geq 1$ together imply that \[
 	\E\left[g\left(\vec{\mu}\left(D_{\vec{d}}\right)\right)\right] < \Delta\sum_{i = 1}^m \frac{d_i}{a_i} - \frac{\Delta \lfloor t \rfloor }{2}\sum_{i = 1}^m\frac{1}{a_i} \leq \Delta\sum_{i = 1}^m \frac{d_i}{a_i} - \frac{\Delta t}{4}\sum_{i = 1}^m\frac{1}{a_i}.\] Since $\max\left\{g(\vec{x}) : \A \vec{x}\leq\vec{b}\left(D_{\vec{d}}\right) \right\} = \max\left\{\langle \vec{1}, \vec{x}\rangle : \vec{x}\leq \Delta \left(\frac{d_1}{a_1}, \dots, \frac{d_m}{a_m}\right)\right\} = \Delta \sum_{i= 1}^m \frac{d_i}{a_i},$ we have that
 	\[\max\left\{g(\vec{x}) : \A \vec{x}\leq\vec{b}\left(D_{\vec{d}}\right) \right\}  - \E\left[g\left(\vec{\vec{\mu}}\left(D_{\vec{d}}\right)\right)\right]
 	\geq  \frac{\Delta }{4\epsilon}\left(\sum_{i = 1}^m \frac{1}{a_i}\right) \ln \left(\frac{e^{\epsilon} - 1}{2\delta} + 1\right).\]
 	
We now prove that $\inf_{p \geq 1}\alpha_{p,1}(\A) \sqrt[p]{m} \leq \alpha_{\infty,1}(\A)  = \sum_{i = 1}^m \frac{1}{a_i}$, which proves the theorem statement.
 	Since $\A$ is diagonal,  $\alpha_{\infty,1}(\A) = \sup_{\vec{u} \geq \vec{0}}\left\{\norm{\vec{u}}_{1} : u_{i}a_{i} \leq 1, \forall i \in [m]\right\} = \sum_{i = 1}^m \frac{1}{a_i}.$  Moreover, since $\sqrt[\infty]{m} = 1$, $\alpha_{\infty,1}(\A) \in \left\{\alpha_{p,1}(\A) \sqrt[p]{m} : p \geq 1\right\}$, which implies that $\inf_{p \geq 1}\alpha_{p,1}(\A) \sqrt[p]{m} \leq \alpha_{\infty,1}(\A).$ Therefore,  	\[\max\left\{g(\vec{x}) : \A \vec{x}\leq\vec{b}(D_i) \right\}  - \E[g(\vec{\vec{\mu}}(D))]\geq \frac{\Delta}{4\epsilon} \cdot \inf_{p \geq 1} \left\{\alpha_{p,1}(\A)\sqrt[p]{m} \right\} \cdot \ln \left(\frac{e^{\epsilon} - 1}{2\delta} + 1\right),\] as claimed.
 \end{proof}

    \begin{claim}\label{claim:md_lb}
    	Let $D_{\vec{d}}$ and $S_i^0$ be  defined as in Theorem~\ref{thm:lb}. Then $\Pr\left[\vec{\mu}(D_{\vec{d}}) \in S_i^0\right] > \frac{1}{2}$.
    \end{claim}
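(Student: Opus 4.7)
The plan is to bound the complementary probability $\Pr[\vec{\mu}(D_{\vec{d}}) \in S_i^1]$ from above by $\frac{1}{2}$ using Claim~\ref{claim:md_induction}, and then pass to the complement inside the almost-sure support. The first move is to observe that the set $S_i^1$ defined in the statement of Theorem~\ref{thm:lb} is literally the same as the set $S_{\vec{d}, i, \lfloor t \rfloor}$ appearing in Claim~\ref{claim:md_induction}: both describe the slab $\left\{\vec{x} \in \R^m : \frac{\Delta(d_i - \lfloor t \rfloor)}{a_i} < x_i \leq \frac{\Delta d_i}{a_i}\right\}$. With this identification, instantiating Claim~\ref{claim:md_induction} at $\bar{\vec{d}} = \vec{d}$ and $j = \lfloor t \rfloor$ immediately gives
\[\Pr\left[\vec{\mu}(D_{\vec{d}}) \in S_i^1\right] \leq \delta \sum_{\ell = 0}^{\lfloor t \rfloor - 1} e^{\epsilon \ell} = \delta \cdot \frac{e^{\epsilon \lfloor t \rfloor} - 1}{e^\epsilon - 1}.\]

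Next, I would unpack the choice $t = \frac{1}{\epsilon}\ln\left(\frac{e^\epsilon - 1}{2\delta} + 1\right)$, which rearranges to the clean identity $e^{\epsilon t} - 1 = \frac{e^\epsilon - 1}{2\delta}$. Because $\lfloor t \rfloor \leq t$, monotonicity of $e^{\epsilon(\cdot)}$ yields $e^{\epsilon \lfloor t \rfloor} - 1 \leq \frac{e^\epsilon - 1}{2\delta}$, so the displayed right-hand side collapses to $\delta \cdot \frac{1}{2\delta} = \frac{1}{2}$. Finally, the feasibility hypothesis $\A \vec{\mu}(D_{\vec{d}}) \leq \vec{b}(D_{\vec{d}}) = \Delta \vec{d}$ (holding with probability 1) together with $a_i > 0$ forces $\mu(D_{\vec{d}})_i \leq \Delta d_i / a_i$ almost surely, so $S_i^0$ and $S_i^1$ together exhaust the support of $\vec{\mu}(D_{\vec{d}})$. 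Passing to complements then delivers $\Pr[\vec{\mu}(D_{\vec{d}}) \in S_i^0] \geq 1 - \frac{1}{2} = \frac{1}{2}$.

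The one delicate point, which I expect to be the main obstacle, is upgrading this to the strict inequality $> \frac{1}{2}$ stated in the claim. When $t$ is not an integer, $\lfloor t \rfloor < t$ makes the exponential step above strict and the conclusion is immediate. When $t$ happens to land on an integer, the argument above only delivers $\geq \frac{1}{2}$; however, this weaker bound already suffices to carry the rest of the proof of Theorem~\ref{thm:lb}, since substituting it into Equation~\eqref{eq:bound_exp_app} yields $\E[g(\vec{\mu}(D_{\vec{d}}))] \leq \Delta \sum_i \frac{d_i}{a_i} - \frac{\Delta \lfloor t \rfloor}{2}\sum_i \frac{1}{a_i}$, and the bound $\lfloor t \rfloor > t/2$ (which follows from $t \geq 1$ together with $\lfloor t \rfloor \geq 1$ and $t < \lfloor t \rfloor + 1 \leq 2\lfloor t \rfloor$) then produces the strict inequality $\E[g(\vec{\mu}(D_{\vec{d}}))] < \Delta \sum_i \frac{d_i}{a_i} - \frac{\Delta t}{4}\sum_i \frac{1}{a_i}$ needed downstream.
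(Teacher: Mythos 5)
Your proof follows the paper's argument essentially verbatim: instantiate Claim~\ref{claim:md_induction} at $\bar{\vec{d}} = \vec{d}$ and $j = \lfloor t \rfloor$, sum the geometric series, use the identity $e^{\epsilon t} - 1 = \frac{e^{\epsilon}-1}{2\delta}$ to bound $\Pr\left[\vec{\mu}(D_{\vec{d}}) \in S_i^1\right]$ by $\frac{1}{2}$, and pass to the complement within the almost-sure support. Your remark about strictness is a genuine (if minor) improvement on the paper: when $t$ is an integer the paper's chain of inequalities ends in equality and thus only yields $\Pr\left[\vec{\mu}(D_{\vec{d}}) \in S_i^0\right] \geq \frac{1}{2}$, and your verification that this non-strict bound still produces the strict downstream inequality in Theorem~\ref{thm:lb} (via $\lfloor t \rfloor > t/2$) correctly patches that small imprecision.
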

    
    \begin{proof}
    Let $S_i^1$ and $S_{\vec{d},i, \lfloor t \rfloor}$ be  defined as in Theorem~\ref{thm:lb}.
We prove that $\Pr\left[\mu(D_{\vec{d}}) \in S_{\vec{d},i, \lfloor t \rfloor}\right] \leq \frac{1}{2}$. Since $S_{\vec{d},i, \lfloor t \rfloor} = S_i^1$, this implies that $\Pr\left[\mu(D_{\vec{d}}) \in S_i^0\right] > \frac{1}{2}$. This claim follows from the following chain of inequalities, which themselves follow from Claim~\ref{claim:md_induction} and the fact that $t = \frac{1}{\epsilon} \log \left(\frac{e^{\epsilon} - 1}{2\delta} + 1\right)$:
    	\begin{align*}
    	\Pr\left[\mu(D_{\vec{d}}) \in S_{\vec{d},i, \lfloor t \rfloor}\right] &\leq \delta \sum_{j = 0}^{\lfloor t \rfloor - 1} e^{\epsilon j}\\
    	&= \frac{\delta\left(e^{\epsilon\lfloor t \rfloor} - 1\right)}{e^\epsilon - 1}\\
    	&\leq \frac{\delta\left(\exp\left(\epsilon t\right) - 1\right)}{\exp(\epsilon) - 1}\\
    	&= \frac{\delta\left(\exp\left(\epsilon\left(\frac{1}{\epsilon} \log \left(\frac{e^{\epsilon} - 1}{2\delta} + 1\right)\right)\right) - 1\right)}{\exp(\epsilon) - 1}\\
    	&= \frac{\delta\left(\exp\left(\log \left(\frac{e^{\epsilon} - 1}{2\delta} + 1\right)\right) - 1\right)}{\exp(\epsilon) - 1}\\
    	&= \frac{\delta\left(\left(\frac{e^{\epsilon} - 1}{2\delta} + 1\right) - 1\right)}{\exp(\epsilon) - 1}\\
    	&= \frac{\delta\left(\frac{e^{\epsilon} - 1}{2\delta}\right)}{\exp(\epsilon) - 1}\\
    	&= \frac{1}{2}.
    	\end{align*}
    	Therefore, $\Pr\left[\vec{\mu}(D_{\vec{d}}) \in S_i^0\right] > \frac{1}{2}$.
    \end{proof}

\section{Characterizing $(\epsilon,0)$-differentially private mechanisms}\label{app:characterization}
In Section~\ref{sec:multi_dim} we presented a nearly optimal $(\epsilon, \delta)$-DP mechanism for private optimization. The optimal $(\epsilon, 0)$-DP mechanism for this problem is considerably easier to characterize.

\begin{theorem}\label{thm:pure_DP}
Let $S^* = \bigcap_{D \subseteq \cX}\left\{\vec{x} : \A \vec{x} \leq \vec{b}(D)\right\}$ be the intersection of all feasible sets across all databases $D$. If $S^*$ is nonempty, then the optimal $(\epsilon, 0)$-differentially private mechanism outputs $\argmax_{\vec{x} \in S^*} g(\vec{x})$  with probability 1. If $S^*$ is empty, then no $(\epsilon, 0)$-differentially private mechanism exists.
\end{theorem}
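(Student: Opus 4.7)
The plan is to reduce both parts of the theorem to a single structural claim: for any $(\epsilon,0)$-differentially private mechanism $\vec{\mu}$ whose output satisfies $\A\vec{\mu}(D) \leq \vec{b}(D)$ with probability $1$ for every database $D$, and for every reference database $D_0$, the output $\vec{\mu}(D_0)$ must lie in $S^*$ almost surely. Once this claim is in hand, the two cases of the theorem follow immediately. If $S^* = \emptyset$, then the distribution of $\vec{\mu}(D_0)$ has empty support, which is impossible, so no such mechanism exists. If $S^* \neq \emptyset$, then the deterministic mechanism that ignores the database and outputs a fixed point in $\argmax_{\vec{x} \in S^*} g(\vec{x})$ is trivially $(\epsilon,0)$-DP, is feasible for every $D$ (since $S^*$ is contained in every feasible region), and pointwise maximizes $g(\vec{\mu}(D))$ among all mechanisms whose outputs lie in $S^*$ almost surely. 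Hence it is optimal.

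The core claim will follow from a chain argument that lifts the $(\epsilon,0)$-DP guarantee from neighboring databases to arbitrary ones. Given two databases $D, D'$, I would form an explicit chain $D = D_1 \sim D_2 \sim \cdots \sim D_k = D'$ of length $k-1 \leq |D| + |D'|$ by first removing the records of $D \setminus D'$ one at a time and then inserting the records of $D' \setminus D$ one at a time; any intermediate step differs from its predecessor on exactly one record, so consecutive databases in the chain are neighboring. Iterating the $(\epsilon,0)$-DP inequality along this chain yields, for every measurable $V \subseteq \R^n$,
\[
\Pr[\vec{\mu}(D) \in V] \leq e^{(k-1)\epsilon}\Pr[\vec{\mu}(D') \in V].
\]
The crucial consequence of $\delta = 0$ is that any $\vec{\mu}(D')$-null set is automatically a $\vec{\mu}(D)$-null set. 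Taking $V = \R^m \setminus \{\vec{x} : \A \vec{x} \leq \vec{b}(D')\}$, which is a $\vec{\mu}(D')$-null set by the feasibility hypothesis, gives $\Pr[\vec{\mu}(D) \in V] = 0$, i.e. $\vec{\mu}(D)$ is feasible for $D'$ with probability $1$. Since $D'$ was arbitrary and the intersection is taken over all databases, $\vec{\mu}(D) \in S^*$ almost surely by Lemma~\ref{lem:intersection}.

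I expect the proof to be largely structural, with no delicate estimates. The one point that warrants care is justifying that any two (finite) databases are connected by a finite chain of neighbors and that iterated $(\epsilon,0)$-DP is valid for every measurable event, since this is the mechanism by which ``disjoint-support'' phenomena propagate globally under pure DP. A secondary point worth mentioning is that the statement tacitly assumes the maximum of $g$ over $S^*$ is attained; under the Lipschitz assumption this is not automatic in general, but it is immediate whenever $S^*$ is closed and bounded, and otherwise one can replace $\argmax$ by any fixed $\eta$-optimal point and let $\eta \to 0$.
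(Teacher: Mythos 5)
Your proposal is correct and follows essentially the same route as the paper's proof: a chain of neighboring databases combined with the fact that $(\epsilon,0)$-DP forces null sets (equivalently, supports) to propagate, so any feasible pure-DP mechanism must output a point of $S^*$ almost surely, after which the constant mechanism at $\argmax_{\vec{x}\in S^*} g(\vec{x})$ is trivially optimal. Your write-up is in fact slightly more explicit than the paper's at the key step (iterating the DP inequality and using $\delta=0$ to transfer null sets), which the paper leaves implicit in the assertion that neighboring databases must have equal supports.
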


\begin{proof}
Fix a mechanism, and let $P(D)$ be the set of vectors $\vec{x}$ in the support of the mechanism's output given as input the database $D$. We claim that if the mechanism is $(\epsilon, 0)$-differentially private, then there exists a set $P^*$ such that $P(D) = P^*$ for all databases $D$. Suppose, for the sake of a contradiction, that there exist databases $D$ and $D'$ such that $P(D) \not= P(D')$. Let $D_1, \dots, D_n$ be a sequence of databases such that $D_1 = D$, $D_n = D'$, and each pair of databases $D_i$ and $D_{i+1}$ are neighbors. Then there must exist a pair of neighboring databases $D_i$ and $D_{i+1}$ such that $P(D_i) \not= P(D_{i+1})$, which contradicts the fact that the mechanism is $(\epsilon, 0)$-differentially private.
Therefore, if the mechanism is $(\epsilon, 0)$-differentially private, then to satisfy the feasibility requirement, we must have that $P^* \subseteq S^*$. If $S^*$ is empty, then no such mechanism exists. If $S^*$ is nonempty, then the optimal $(\epsilon, 0)$-differentially private mechanism outputs $\argmax_{\vec{x} \in S^*}g(\vec{x})$ with probability 1.
\end{proof}

\section{Additional information about experiments}\label{app:experiments}
In Figure~\ref{fig:quality_app}, we analyze the quality of our algorithm for several different parameter settings. First, we select the number of individuals $n$ to be a value in $\{500,1000, 1500\}$ (the number of investors is $n = 500$ in Figures~\ref{fig:n500r11}-\ref{fig:n500r13}, $n = 1000$ in Figures~\ref{fig:n1000r22}-\ref{fig:n1000r27}, and $n = 1500$ in Figures~\ref{fig:n1500r33}-\ref{fig:n1500r42}). Then, we define each element of the database (money given by individuals to an investor) as a draw from the uniform distribution between 0 and 1, so $b(D)$ equals the sum of these $n$ random variables. The sensitivity of $b(D)$ is therefore $\Delta = 1$. We set the minimum return $r_{min}$ to be a value in $[1,5]$.
We calculate the objective value $v^* \in \R$ of the optimal solution to Equation~\eqref{eq:portfolio}. Then, for $\delta \in \left[\frac{1}{n^2}, 0.002\right]$ and $\epsilon \in [0.5, 2.5]$, we run our algorithm 50 times and calculate the average objective value $\hat{v}_{\epsilon, \delta} \in \R$ of the optimal solutions.
\begin{figure}[t]
	\centering
	\begin{subfigure}{.32\textwidth}
		\includegraphics[width=\textwidth]{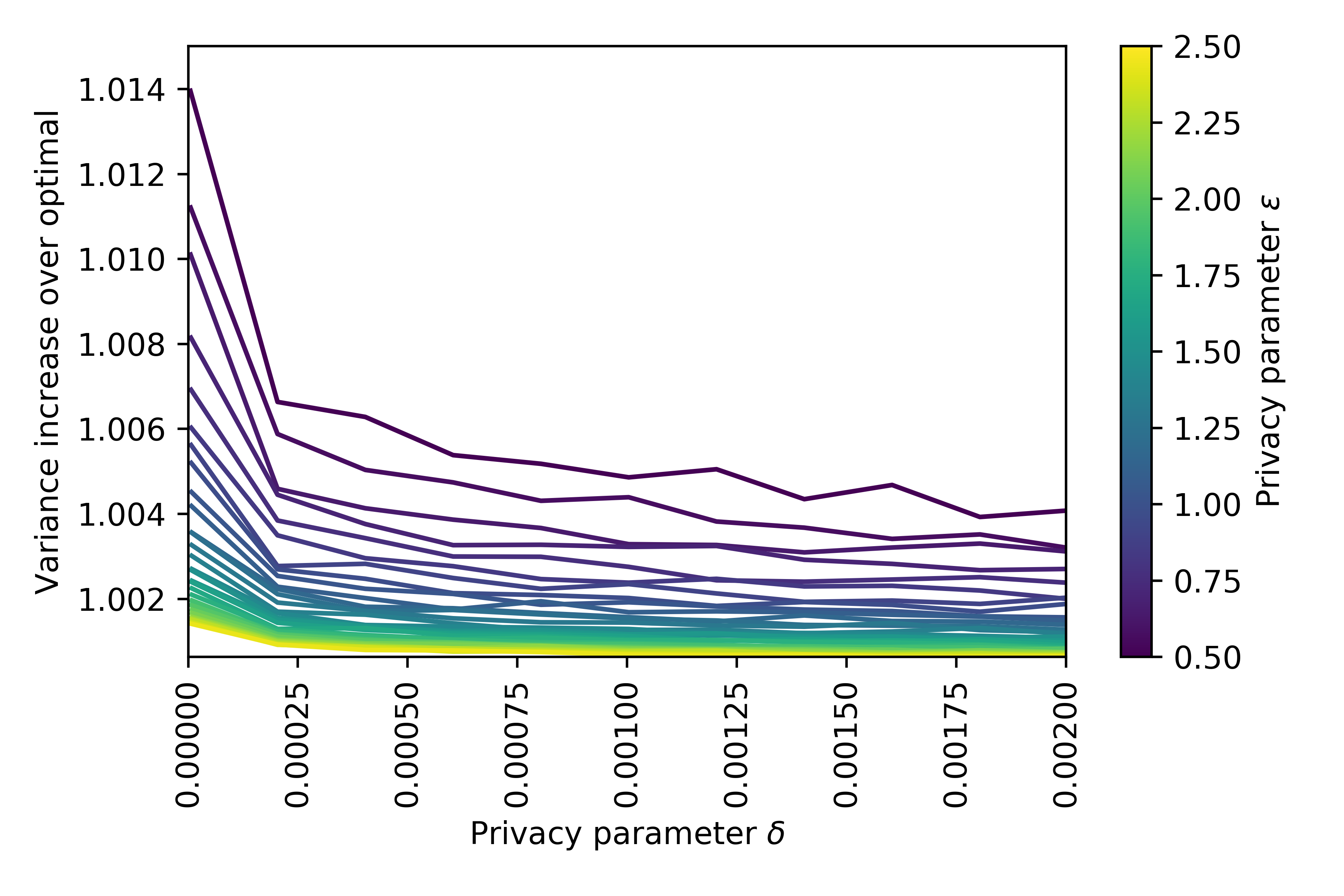}\centering
		\caption{$n = 500$ and $r_{min} = 1.1$}\label{fig:n500r11}
	\end{subfigure}
	\begin{subfigure}{.32\textwidth}
		\includegraphics[width=\textwidth]{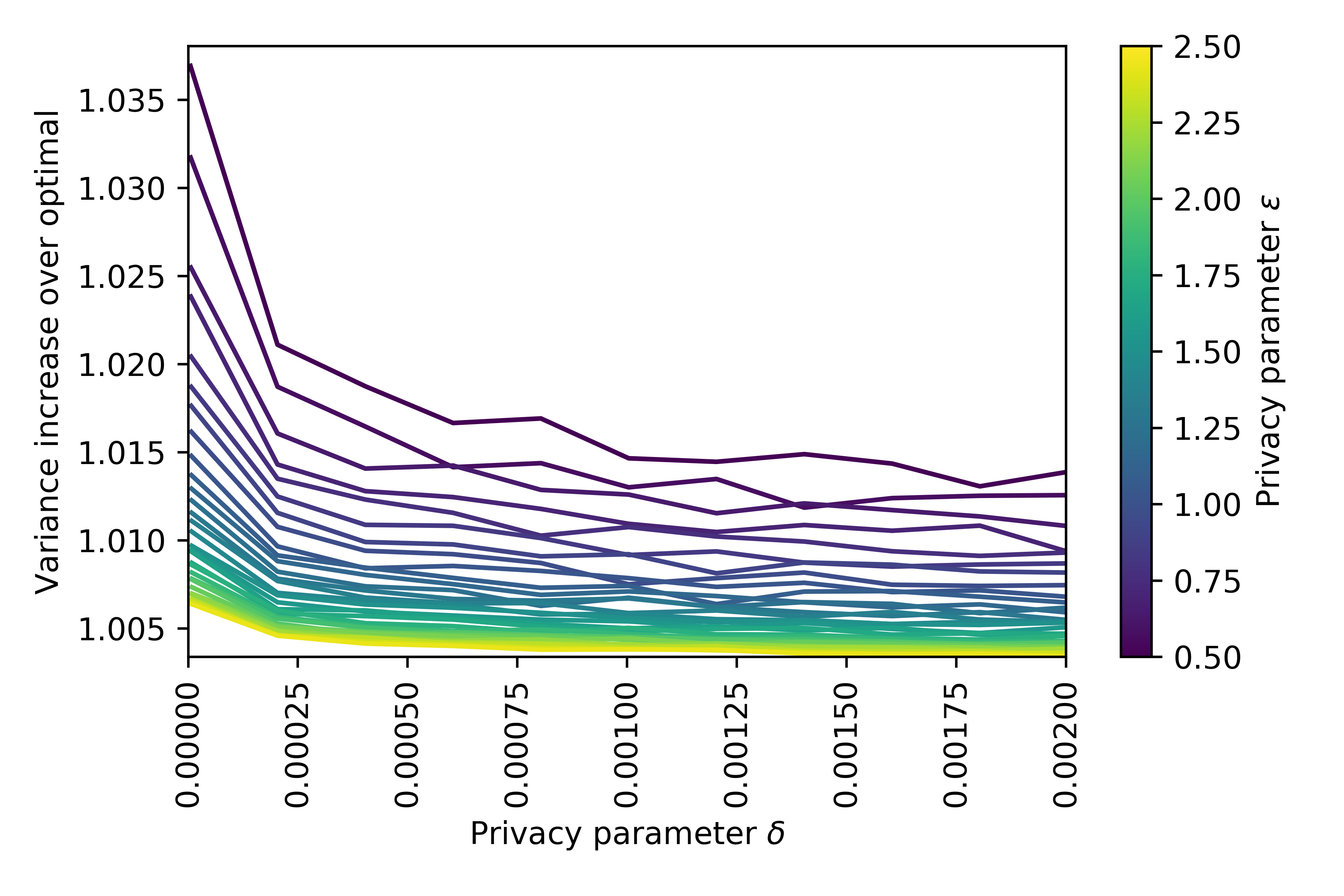}\centering
		\caption{$n = 500$ and $r_{min} = 1.2$}\label{fig:n500r12}
	\end{subfigure}
	\begin{subfigure}{.32\textwidth}
		\includegraphics[width=\textwidth]{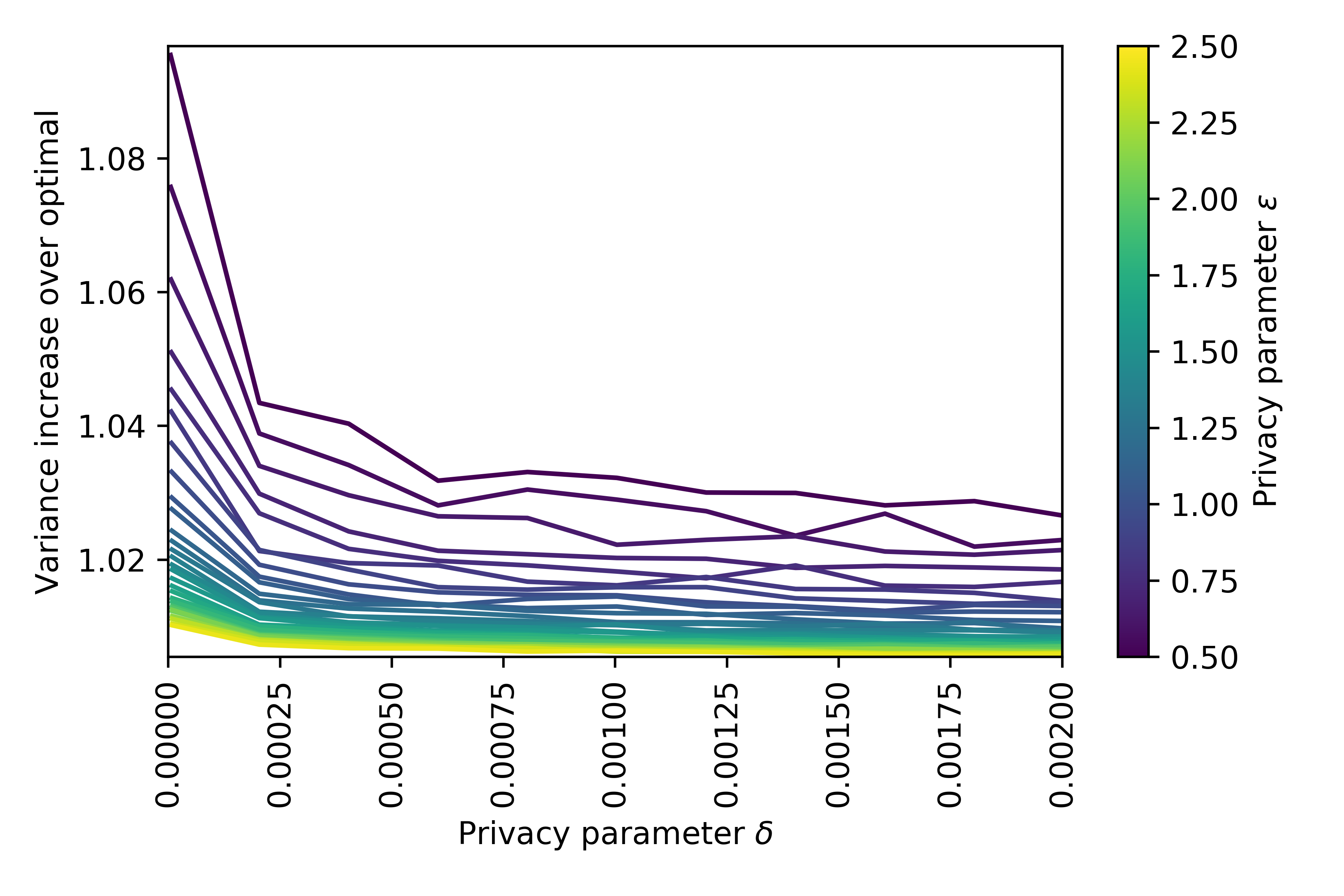}\centering
		\caption{$n = 500$ and $r_{min} = 1.3$}\label{fig:n500r13}
	\end{subfigure}
	\begin{subfigure}{.32\textwidth}
		\includegraphics[width=\textwidth]{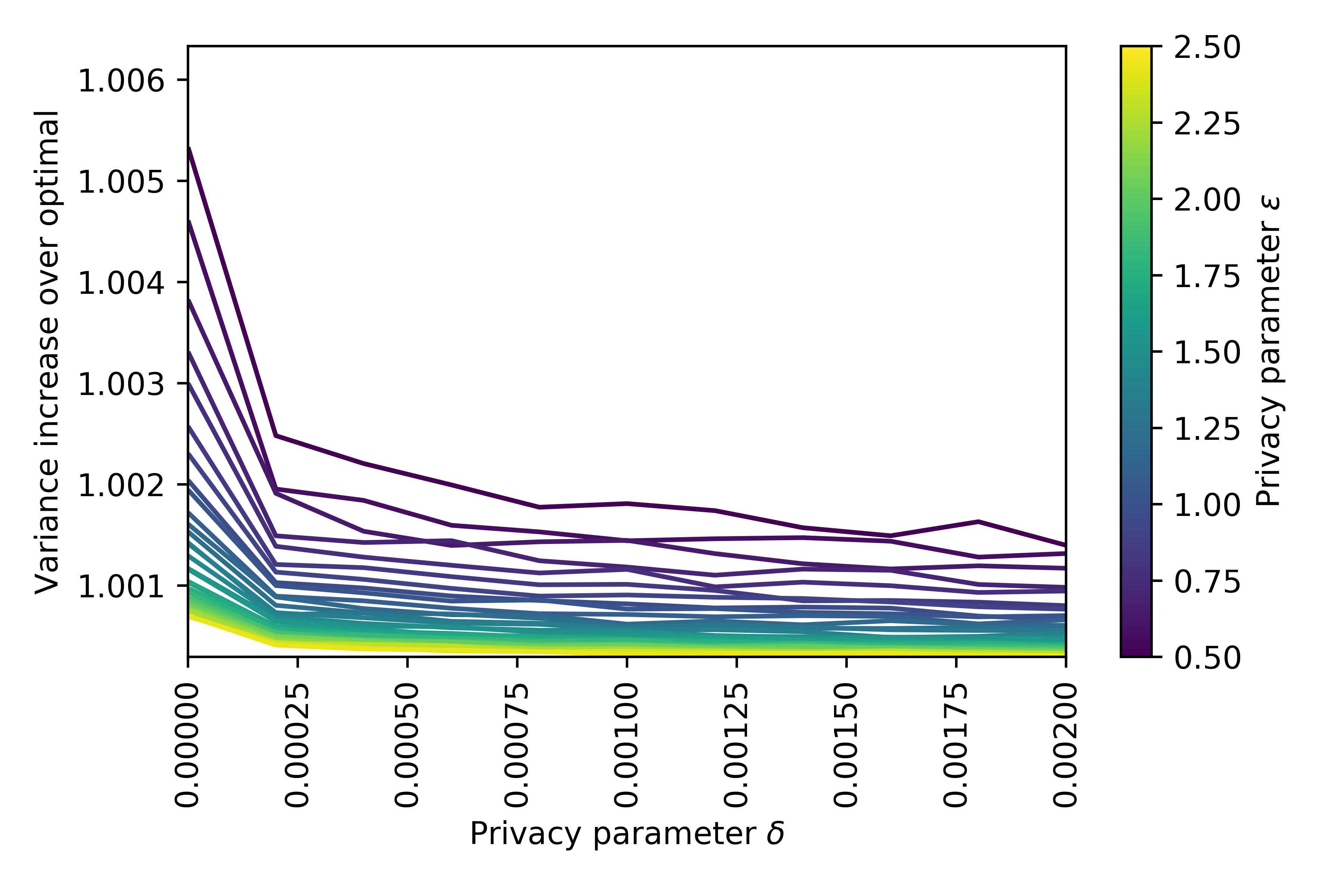}\centering
		\caption{$n = 1000$ and $r_{min} = 2.2$}\label{fig:n1000r22}
	\end{subfigure}
	\begin{subfigure}{.32\textwidth}
		\includegraphics[width=\textwidth]{n1000_r25}\centering
		\caption{$n = 1000$ and $r_{min} = 2.5$}\label{fig:n1000r25}
	\end{subfigure}
	\begin{subfigure}{.32\textwidth}
		\includegraphics[width=\textwidth]{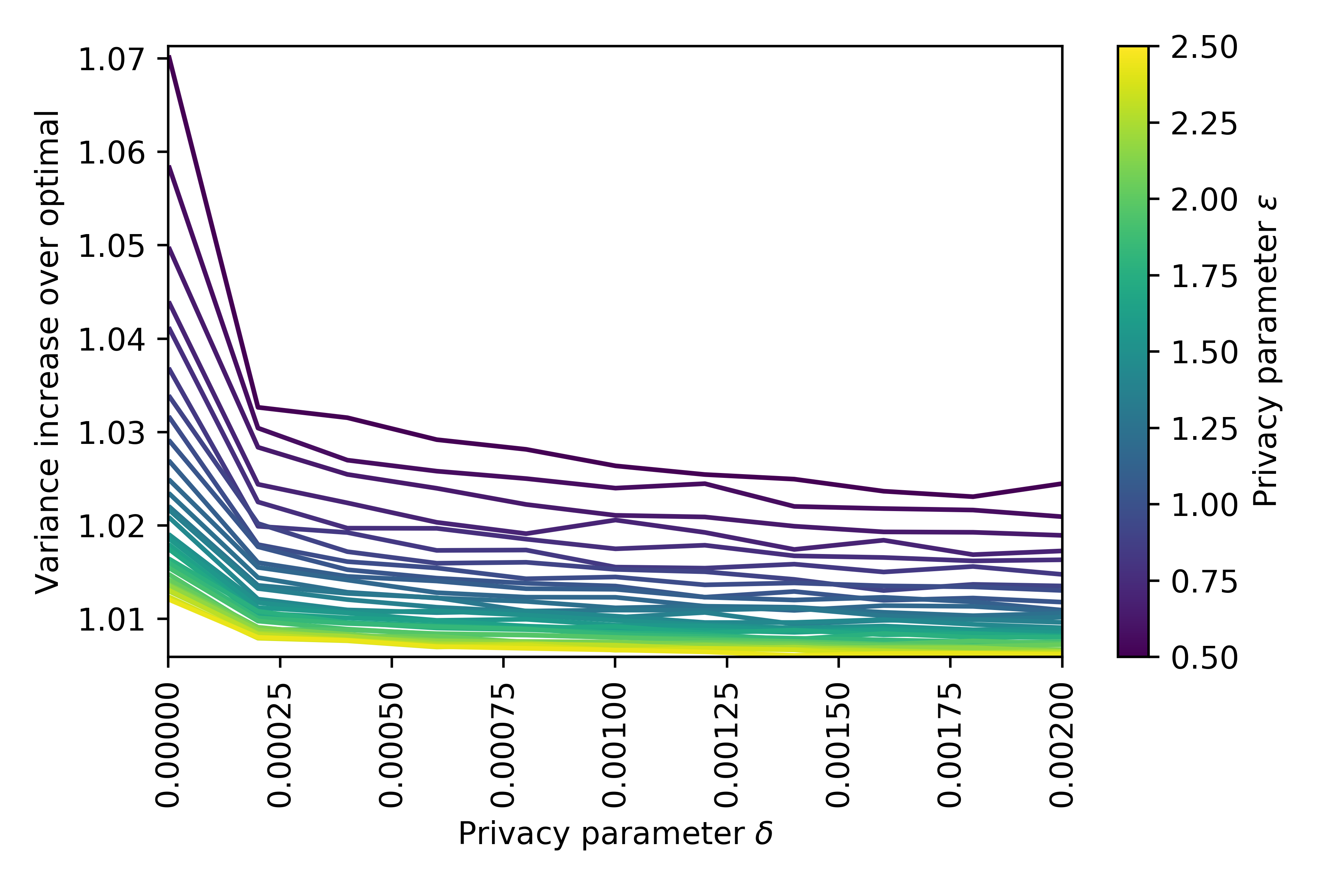}\centering
		\caption{$n = 1000$ and $r_{min} = 2.7$}\label{fig:n1000r27}
	\end{subfigure}
	\begin{subfigure}{.32\textwidth}
		\includegraphics[width=\textwidth]{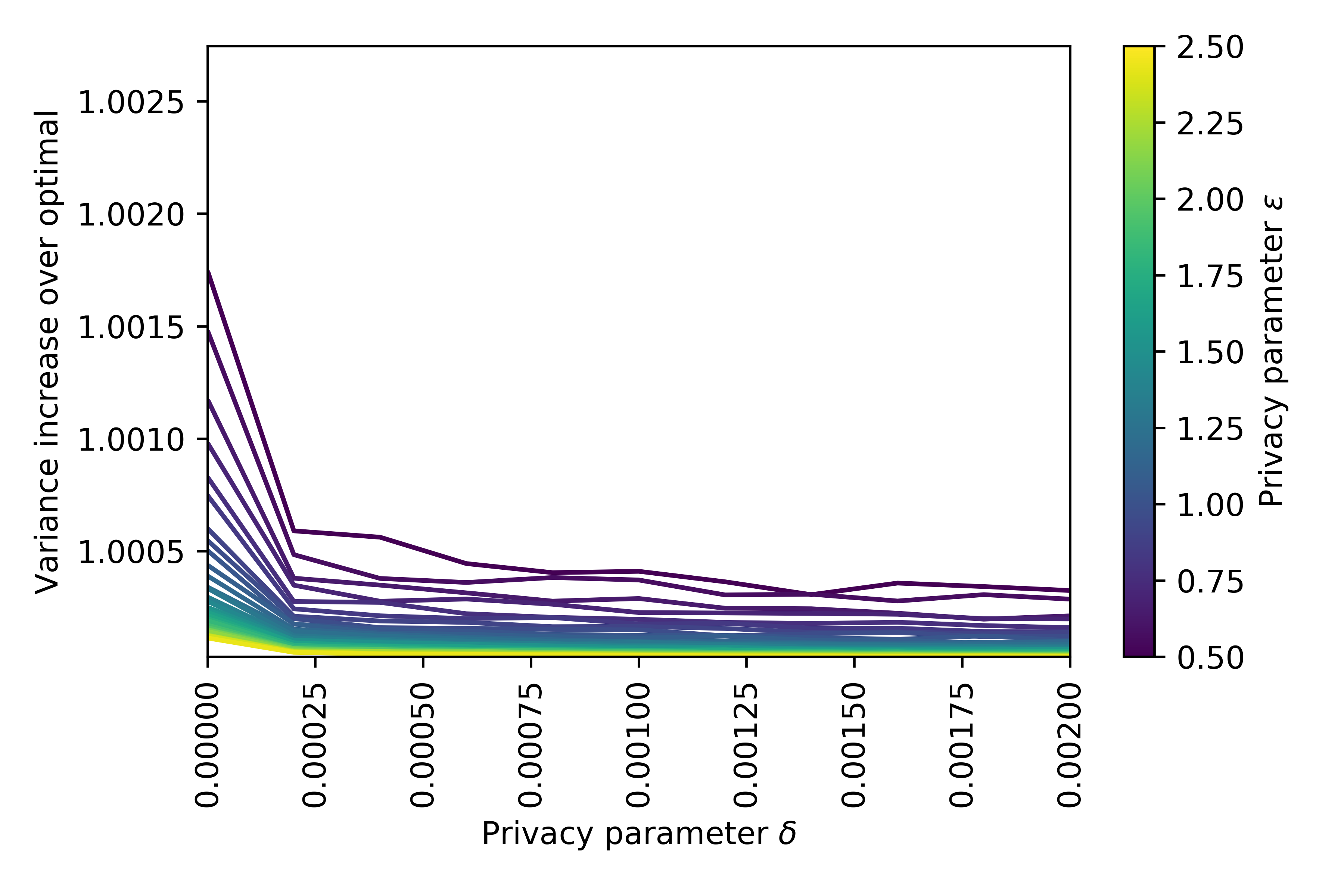}\centering
		\caption{$n = 1500$ and $r_{min} = 3.3$}\label{fig:n1500r33}
	\end{subfigure}
	\begin{subfigure}{.32\textwidth}
		\includegraphics[width=\textwidth]{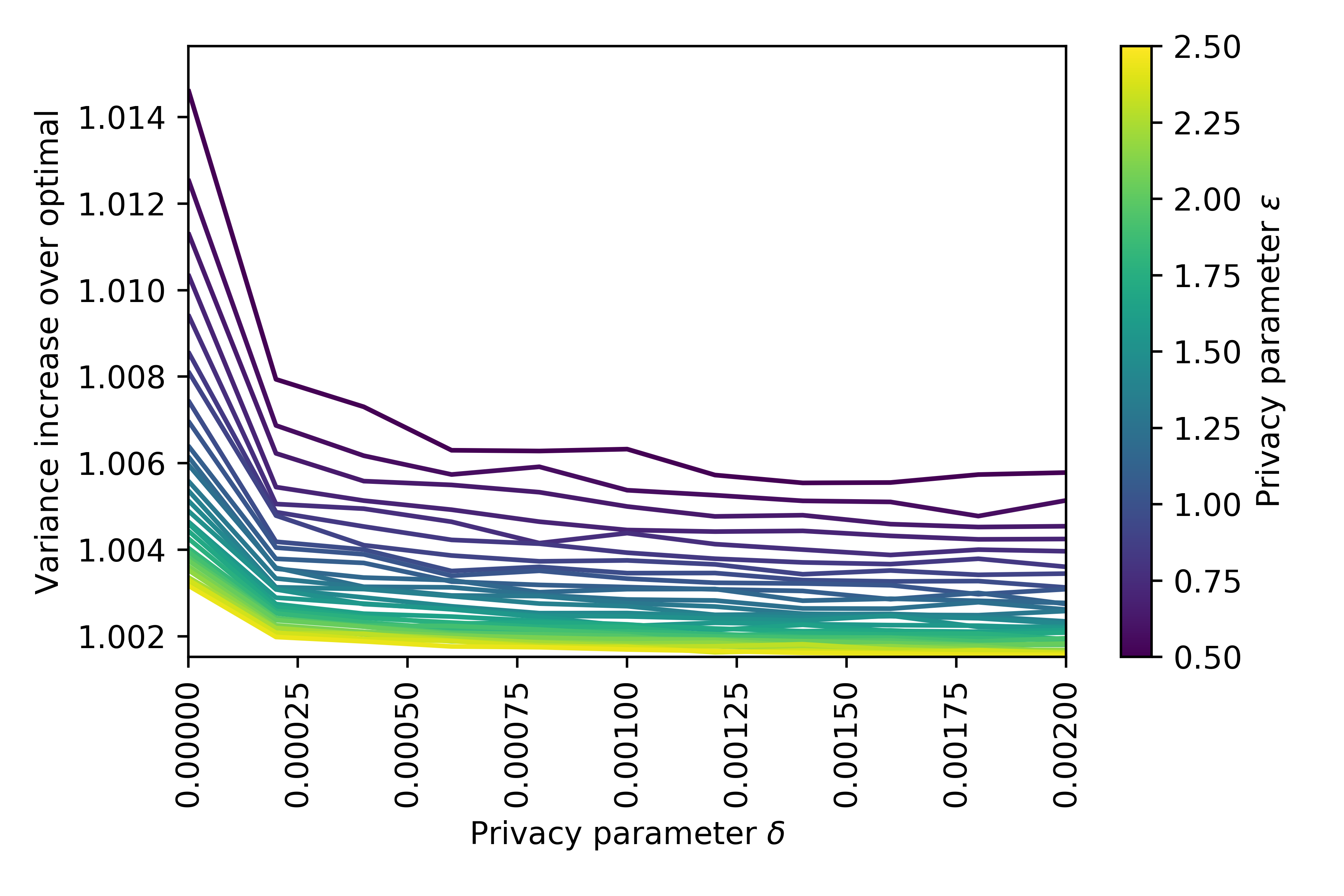}\centering
		\caption{$n = 1500$ and $r_{min} = 3.8$}\label{fig:n1500r38}
	\end{subfigure}
	\begin{subfigure}{.32\textwidth}
		\includegraphics[width=\textwidth]{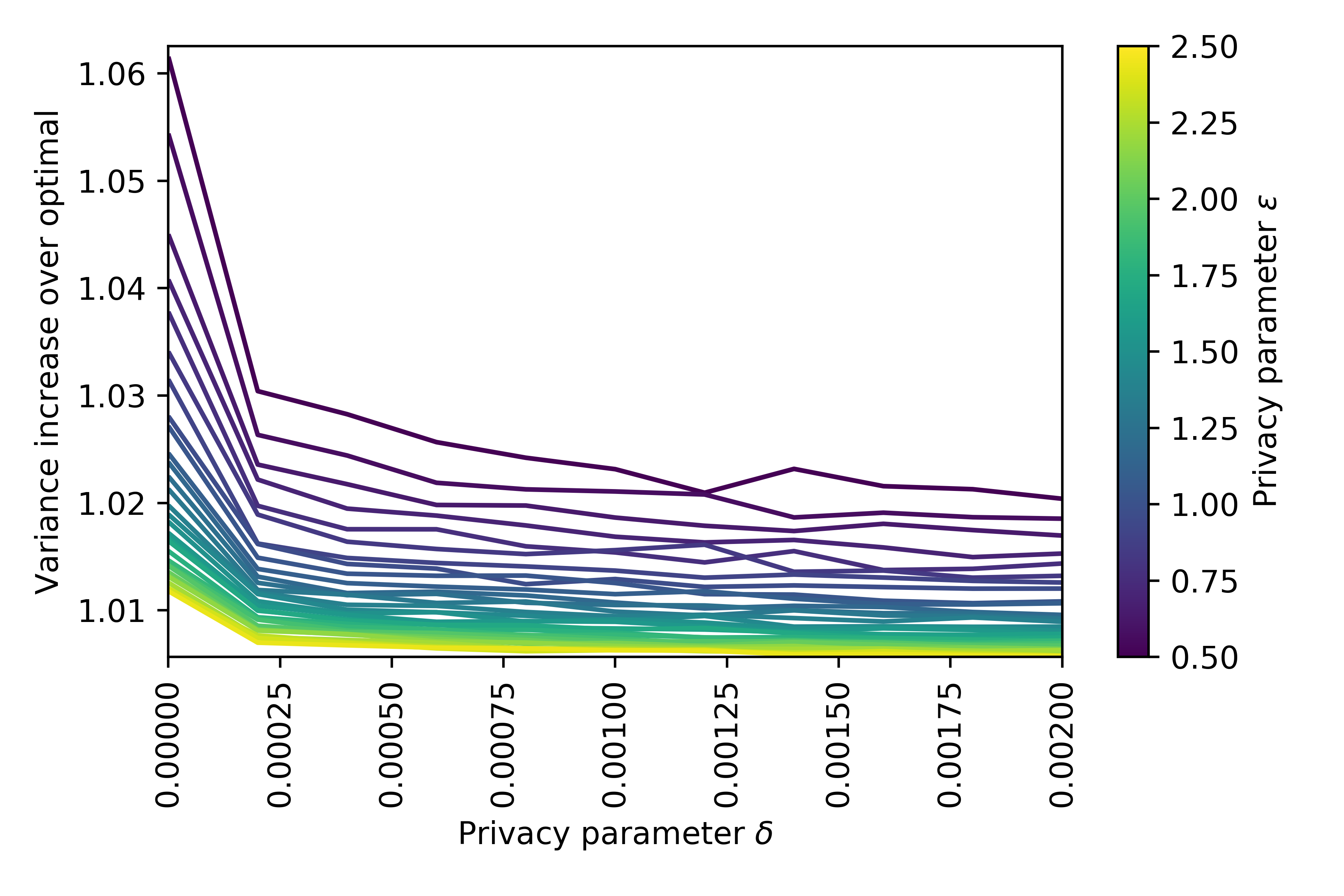}\centering
		\caption{$n = 1500$ and $r_{min} = 4.2$}\label{fig:n1500r42}
	\end{subfigure}
	\caption{Quality in the portfolio optimization application for various choices of the number $n$ of investors and the minimum return $r_{min}$.  These plots show the multiplicative increase in the objective function value of our algorithm's solution---for various choices of $\epsilon$ and $\delta$---over the objective function value of the optimal solution to the original optimization problem (Equation~\eqref{eq:portfolio}). Darker shading corresponds to lower values of $\epsilon$ and therefore stronger privacy.}
	\label{fig:quality_app}
\end{figure}

We find that there is a sweet spot for the parameter choices $n$ and $r_{min}$. If $r_{min}$ is too small, the budget constraint is non-binding with or without privacy, so the variance increase over optimal is always 1.
We find this is true when $n = 500$ and $r_{min} \leq 1$, when $n = 1000$ and $r_{min} \leq 2.1$, and when $n = 1500$ and $r_{min} \leq 3.2$. This also explains why the variance increase over optimal improves as $r_{min}$ shrinks, as we can observe from Figure~\ref{fig:quality_app}.
Meanwhile, if $r_{min}$ is too large, then the original quadratic program (Equation~\eqref{eq:portfolio}) is infeasible. We find this is true when $n = 500$ and $r_{min} \geq 1.4$, when $n = 1000$ and $r_{min} \geq 2.8$, and when $n = 1500$ and $r_{min} \geq 4.3$. 

\end{document}